\documentclass{article} 
\usepackage{iclr2027_conference,times}

\usepackage{amsmath,amsfonts,bm}

\def\eqref#1{equation~\ref{#1}}

\def\1{\bm{1}}

\DeclareMathAlphabet{\mathsfit}{\encodingdefault}{\sfdefault}{m}{sl}
\SetMathAlphabet{\mathsfit}{bold}{\encodingdefault}{\sfdefault}{bx}{n}

\usepackage{hyperref}
\usepackage{url}

\usepackage{amsmath}
\usepackage{amssymb}
\usepackage{mathtools}
\usepackage{amsthm}
\usepackage[capitalize]{cleveref}
\usepackage{wrapfig}

\usepackage{subcaption}
\usepackage{algorithm}
\usepackage{algpseudocode}
\usepackage{makecell}
\usepackage{tabularx}
\usepackage{array}
\usepackage{float}
\theoremstyle{plain}
\newtheorem{theorem}{Theorem}[section]
\newtheorem{proposition}[theorem]{Proposition}

\newtheorem{corollary}[theorem]{Corollary}
\theoremstyle{definition}

\newtheorem{assumption}[theorem]{Assumption}
\theoremstyle{remark}

\usepackage[textsize=tiny]{todonotes}
\usepackage{booktabs}       
\usepackage{amsfonts}       
\usepackage{nicefrac}       
\usepackage{microtype}      
\usepackage[table]{xcolor}         
\crefname{assumption}{Assumption}{Assumptions}
\Crefname{assumption}{Assumption}{Assumptions}
\title{Reprogramming Vision-Language Models via Structured Prompt Reparameterization}

\iclrfinalcopy

\author{
Zizhao Li,
Chengyi Cai,
Mohammed Yaqoob Ansari,
Feng Liu,
Joseph West, \\ ~\textbf{Kourosh Khoshelham}\\
The University of Melbourne, Melbourne, Australia
}

\begin{document}

\maketitle

\begin{abstract}
Visual reprogramming adapts pretrained models to downstream tasks by modifying their input and output interfaces while keeping the backbone fixed. In vision-language models, existing methods mainly rely on intra-class prompt aggregation and do not explicitly model relationships among classes. However, fine-grained categories often exhibit highly overlapping attribute descriptions and strong inter-class correlation in the text embedding space, where discriminative cues lie in subtle low-variance components. We propose Reparameterized Inter-Class Visual Reprogramming (RVP), a structured framework that aggregates multiple text prompts within each class and applies residual correction across classes. We also show that CLIP-based visual reprogramming with input-independent linear output aggregation can be expressed as a linear mapping from frozen image embeddings to downstream logits, and use this view to design a structured reparameterization that models shared semantic components and class-specific differences. RVP uses only a single visual prompt and can be reparameterized at inference into a frozen backbone followed by a linear classifier, incurring nearly zero computational overhead. Across 11 few-shot classification benchmarks and four CLIP backbones, RVP consistently improves over prior visual reprogramming methods with comparable or better inference efficiency.
\end{abstract}

\section{Introduction}

Model reprogramming~\citep{vinod2020reprogramming,chen2024model,hung2023low} adapts a pretrained model to downstream tasks by modifying its input and output interfaces while keeping the pretrained parameters fixed. In vision, this is often instantiated as visual reprogramming (VR)~\citep{cai2024sample,cai2024bayesian,tsao2023autovp,chen2023understanding,elsayedadversarial,tsai2020transfer}, where a trainable input transformation is learned while the backbone remains frozen. For vision-language models~\citep{radford2021learning,pmlr-v139-jia21b}, this paradigm is particularly appealing for few-shot adaptation because it preserves pretrained representations, requires only a small number of trainable parameters, and adds little inference overhead.

In CLIP-based VR, an input image is transformed by a visual prompt and encoded by the frozen image encoder, while downstream classes are represented by text embeddings from the frozen text encoder. Classification is then performed through image-text similarity. Recent methods improve this pipeline by introducing multiple textual descriptions for each class and aggregating their similarity scores~\citep{cai2025attribute,cai2025understanding,wu2026dga}. However, these methods remain limited to intra-class prompt selection and do not explicitly model relationships among classes.

This limitation becomes severe in fine-grained recognition. As shown in \cref{fig:svd}, visually similar categories often share highly overlapping attribute descriptions, and prompt groups from different classes can exhibit high cosine similarity. This suggests that attribute prompts are often highly similar and that independent intra-class prompt selection is insufficient to resolve cross-class ambiguity. Moreover, the text embedding matrix exhibits a rapidly decaying singular value spectrum, indicating strong inter-class correlation and a low effective rank. In other words, many classes share dominant semantic directions, while the truly discriminative cues lie in subtle, low-variance components. These observations motivate explicit modeling of class relationships.

\begin{figure*}
    \centering
    \includegraphics[width=\linewidth]{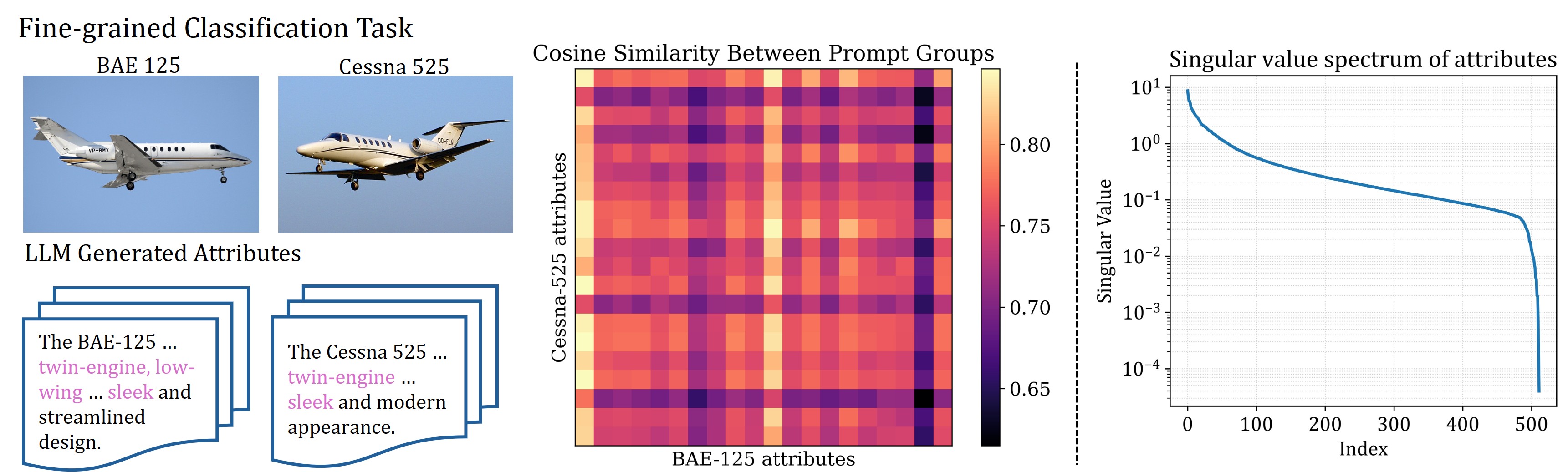}
    \vspace{-0.4cm}
    \caption{Fine-grained classification with attribute prompts. Prompt groups from visually similar classes show high cosine similarity, indicating that attribute prompts are highly similar. The rapidly decaying singular value spectrum of the text embedding matrix further reveals strong inter-class correlation, motivating explicit modeling of class relationships.}
    \label{fig:svd}
    \vspace{-0.4cm}
\end{figure*}

Based on this observation, we propose \emph{Reparameterized Inter-Class Visual Reprogramming} (RVP), a structured framework that jointly models intra-class aggregation and inter-class residual correction. RVP first learns how to combine multiple attribute descriptions within each class, and then refines the resulting class logits through an inter-class relation matrix. This design is effective for two reasons. First, it can suppress shared semantic components and amplify subtle class-specific differences, which are exactly the cues that matter in fine-grained recognition. Second, by restricting the mapping to a structured residual form, RVP preserves the pretrained semantic subspace of CLIP and avoids the overfitting risk of dense unconstrained mappings. Unlike the prior Decoupled Visual Reprogramming~\citep{cai2025understanding}, which relies on multiple visual prompts and repeated backbone forward passes, RVP uses only a single visual prompt and can be reparameterized~\citep{repvgg,luo2023towards} into a frozen backbone followed by a single linear classifier, incurring nearly zero additional overhead at inference. Here, reparameterization means folding multiple parameter groups into one equivalent matrix.

In summary, our contributions are threefold. First, we show that, given a visual prompt, CLIP-based visual reprogramming reduces to a linear mapping from normalized CLIP image embeddings to downstream logits, providing a unified view of prompt aggregation and label mapping strategies. Second, we propose RVP, a reparameterizable inter-class modeling framework that introduces a structured parameterization of this mapping. Specifically, RVP first constructs class logits through text-embedding-based intra-class attribute aggregation, and then refines them with a residual inter-class correction matrix. Unlike unconstrained linear classifiers or generic logit adapters, RVP keeps the classifier anchored in the CLIP text-embedding subspace while reducing confusion among fine-grained classes. Third, through extensive experiments, we show that this structured inter-class parameterization improves few-shot visual reprogramming while preserving efficient inference.

In \cref{sec:pre}, we show that, given a visual prompt, CLIP-based visual reprogramming induces a linear mapping \(\phi:\mathbb{R}^{D}\rightarrow\mathbb{R}^{C}\) from normalized reprogrammed-image embeddings to downstream logits, which enables reparameterizable inter-class modeling. In \cref{sec:method}, we present RVP, including its training-time formulation and exact inference-time reparameterization into a single linear classifier. \cref{sec:exp,sec:results} then present the experimental setup and results, showing that RVP consistently improves performance across multiple CLIP backbones, especially on fine-grained datasets such as Aircraft and Cars, while maintaining comparable or lower inference cost.

\section{Related Work}
\textbf{Model Reprogramming.}
Model reprogramming~\citep{chen2024model} adapts pretrained models to downstream tasks by learning transformations at the input and output interfaces, without modifying internal parameters. This strategy preserves pretrained knowledge and avoids catastrophic forgetting~\citep{kirkpatrick2017overcoming}, while enabling architecture-agnostic transfer with few trainable parameters. It has been applied to vision~\citep{chen2023understanding,tsai2020transfer,cai2024bayesian,jin2025lorvp}, graph~\citep{jing2023deep}, acoustic~\citep{yang2021voice2series,yang2023english,hung2023low,yen2023neural}, and language models~\citep{hambardzumyan2021warp,vinod2020reprogramming,jin2024time}, with recent work studying its robustness~\citep{chen2025refine,zheng2025endow}.

Input visual reprogramming (VR)~\citep{cai2024sample,cai2024bayesian,chen2023understanding} is a common instantiation for image classification, where learnable patterns are injected into the input space. Typical designs include padded regions~\citep{chen2023understanding,tsai2020transfer,tsao2023autovp} or watermark-style perturbations~\citep{bahng2022exploring,oh2023blackvip}, and have been successfully extended to vision--language models~\citep{oh2023blackvip,Zhang_2024_CVPR}.

\textbf{Prompt Learning.}
Prompt learning introduces trainable parameters directly into a pretrained model, often in an architecture-dependent manner. Prompts can take the form of textual tokens~\citep{zhou2022learning,zhou2022conditional}, visual prompts on images~\citep{chen2023understanding,oh2023blackvip,tsao2023autovp}, internal token prompts~\citep{wang2023transhp}, or cross-modal mappings~\citep{khattak2023maple}.

Applying visual prompts to images for adapting VLMs is functionally equivalent to VR. Existing methods typically learn a single shared prompt, such as watermark overlays~\citep{bahng2022exploring}, padded patterns~\citep{tsai2020transfer,chen2023understanding}, BlackVIP~\citep{oh2023blackvip}, DAM~\citep{huang2023diversity}. Recently, AttrVR~\citep{cai2025attribute} incorporates multiple attribute prompts for each class to improve image–text alignment. DVP~\citep{cai2025understanding} learns multiple prompts with distinct roles, improving learning capacity.

\textbf{Feature Adapter.}
Few-shot CLIP adaptation can modify either features or classifiers. CLIP-Adapter~\citep{clip-adapter}, Tip-Adapter~\citep{tip_adapter}, and Proto-CLIP~\citep{proto_clip} incorporate downstream visual features, while LDC~\citep{logitdeconfusion_2025_CVPR} further combines multi-level feature adaptation with sample-dependent logit correction. TaskRes~\citep{yu2023task} learns a residual directly in the classifier space, and LP++~\citep{huang2024lp} constructs text-informed classifiers from visual prototypes and text embeddings. In contrast, visual reprogramming preserves the pretrained model and adapts the task interface through input prompting and output label mapping. RVP strengthens the latter by learning structured transformations over text-derived responses.

\section{Preliminaries and Insights}
\label{sec:pre}

CLIP~\citep{radford2021learning} consists of an image encoder \(f_{\mathrm{img}}\) and a text encoder \(f_{\mathrm{txt}}\), which map inputs into a shared embedding space \(\mathcal{Z} \subseteq \mathbb{R}^{D}\), where \(D\) is the embedding dimension. Let \(\mathcal{X}^{\mathrm{S}}\) denote the source image space of CLIP, \(\mathcal{V}\) the text space, \(x^{\mathrm{S}} \in \mathcal{X}^{\mathrm{S}}\) an input image, and \(V \in \mathcal{V}\) a text description. The corresponding \(\ell_2\)-normalized image and text embeddings are
\begin{equation}
    \hat{\mathbf{v}} = \frac{f_{\mathrm{img}}(x^{\mathrm{S}})}{\|f_{\mathrm{img}}(x^{\mathrm{S}})\|_2},
    \quad
    \hat{\mathbf{t}} = \frac{f_{\mathrm{txt}}(V)}{\|f_{\mathrm{txt}}(V)\|_2}.
\end{equation}
CLIP computes the image--text similarity as $f_{\mathrm{clip}}(x^{\mathrm{S}}, V) = \frac{1}{\tau}\hat{\mathbf{v}}^\top \hat{\mathbf{t}},$
where \(\tau\) is the temperature.

For a downstream task defined on \(\mathcal{X}^{\mathrm{T}} \times \mathcal{Y}^{\mathrm{T}}\), let \(x^{\mathrm{T}} \in \mathcal{X}^{\mathrm{T}}\) denote an input image and \(\mathcal{Y}^{\mathrm{T}}=\{1,\dots,C\}\) the label set with \(C\) classes. Each class \(y^{\mathrm{T}} \in \mathcal{Y}^{\mathrm{T}}\) is represented by a set of textual descriptions \(\mathcal{A}(y^{\mathrm{T}})\subseteq\mathcal{V}\), and let \(\mathcal{A}=\bigcup_{y^{\mathrm{T}}\in\mathcal{Y}^{\mathrm{T}}}\mathcal{A}(y^{\mathrm{T}})\). The class logit is computed by aggregating similarity scores:
\begin{equation}
\label{eq:clip_logits}
    \big[f_{\mathrm{logits}}(x^{\mathrm{T}};\mathcal{A})\big]_{y^{\mathrm{T}}}
    =
    \operatorname{agg}_{a \in \mathcal{A}(y^{\mathrm{T}})} f_{\mathrm{clip}}(x^{\mathrm{T}}, a),
\end{equation}
where \(\operatorname{agg}(\cdot)\) is an aggregation operator~\citep{cai2025understanding}.

Visual reprogramming adapts downstream images to a frozen pretrained model by learning an input transformation instead of modifying model parameters. Specifically, a trainable transformation \(f_{\mathrm{in}}(\cdot \mid \delta)\), parameterized by a visual prompt \(\delta\), maps the downstream image \(x^{\mathrm{T}}\) to a reprogrammed input that is compatible with CLIP. The reprogrammed image is then processed by the frozen CLIP encoders.

For a fixed image \(x^{\mathrm{T}}\), let \(T \in \mathbb{R}^{|\mathcal{A}| \times D}\) be the matrix of stacked normalized text embeddings:
$T = [\hat{\mathbf{t}}_{a_1}, \dots, \hat{\mathbf{t}}_{a_{|\mathcal{A}|}}]^\top.$
The similarity scores over all descriptions can then be written as $$M_a = \frac{1}{\tau}T\hat{\mathbf{v}}.$$

\paragraph{From Similarity Scores to Class Logits.}
For aggregation operators that can be represented by
input-independent linear weights, the description-level similarity
scores can be mapped to class logits through $\mathbf{M}_y = \mathbf{M}_a^\top \boldsymbol{\omega}$.
where \(\omega \in \mathbb{R}^{|\mathcal{A}| \times C}\) is a reweighting matrix induced by \(\operatorname{agg}(\cdot)\). Substituting \(M_a\), we obtain 
$M_y = \frac{1}{\tau}\hat{\mathbf{v}}^\top T^\top \omega = \phi(\hat{\mathbf{v}}), $
where \(\phi: \mathbb{R}^{D} \to \mathbb{R}^{C}\) denotes the mapping from the normalized image embedding to downstream class logits. The predicted class is then obtained by selecting the largest logit in \(M_y\).

\section{Reparameterized Inter-Class Visual Reprogramming}
\label{sec:method}

\begin{figure*}[t]
    \centering
    \includegraphics[width=\linewidth]{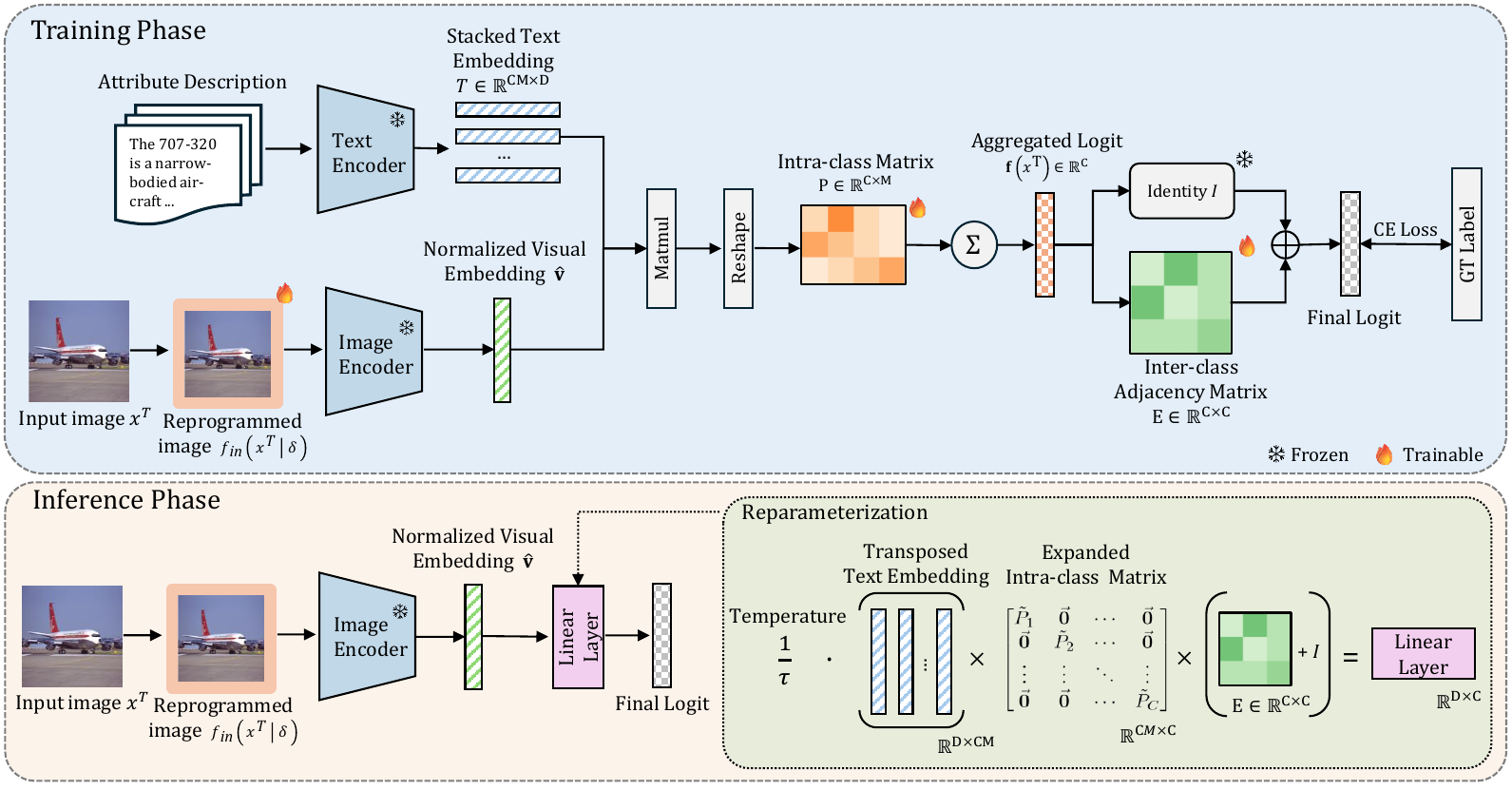}
    \vspace{-0.6cm}
    \caption{Overview of Reparameterized Inter-Class Visual Reprogramming (RVP). During training, a visual prompt transforms the input image, which is then encoded by a frozen image encoder. The cosine similarities between the visual embedding and text embeddings are aggregated by a learnable intra-class matrix to produce class logits, which are further refined by a residual class-relation matrix that captures inter-class dependencies. At inference, the text embeddings and label mapping are reparameterized into a single linear classifier, enabling efficient prediction with a single forward pass.}
    \vspace{-0.3cm}
    \label{fig:main}
\end{figure*}

Following recent advances~\citep{cai2025attribute,cai2025understanding}, we use multiple attribute-based textual descriptions to enrich the semantic space, with \(M\) descriptions for each of the \(C\) downstream classes. However, as discussed earlier, fine-grained categories often exhibit high similarity in the text embedding space, indicating strong inter-class correlations. Methods based solely on independent intra-class prompt selection cannot disentangle these shared dominant semantic directions. To address this limitation, we propose the Reparameterized Inter-Class Visual Reprogramming (RVP) framework.

\cref{fig:main}~illustrates the overall architecture of RVP. RVP jointly models intra-class description weighting and inter-class interactions while keeping the CLIP backbone frozen. The intra-class component refines class prototypes by learning how to aggregate attribute descriptions within each class, while the inter-class component enhances discriminative cues through message passing across classes. Instead of learning a dense \(CM \times C\) mapping from all descriptions to target classes, which introduces \(C^2M\) trainable parameters and is prone to overfitting in few-shot settings, RVP adopts an explicitly structured formulation with only \(C \times M\) parameters for intra-class weighting and \(C \times C\) parameters for inter-class modeling. This reduced parameterization imposes a useful structural prior and improves generalization. In addition, unlike Decoupled Visual Prompting~\citep{cai2025understanding}, which ensembles multiple visual prompts and therefore requires multiple forward passes through the image encoder, RVP uses a single visual prompt and only one forward pass. 

\subsection{Training Phase: Dynamic Aggregation and Message Passing}
Given a downstream image \(x^{\text{T}} \in \mathcal{X}^{\text{T}}\), we first apply a trainable visual transformation \(f_{\text{in}}(\cdot \mid \delta)\) to map it into CLIP's input space, where the transformation is implemented as resizing and boundary padding parameterized by a visual prompt \(\delta\)~\citep{cai2024sample}. The reprogrammed image is then fed into the frozen CLIP image encoder to obtain the \(\ell_2\)-normalized feature vector \(\hat{\mathbf{v}}\).
Let $T \in \mathbb{R}^{CM \times D}$ be the stacked matrix of normalized text embeddings, assuming $M$ textual descriptions for each of the $C$ downstream classes. To refine intra-class prompt selection, we introduce a learnable intra-class weighting matrix \(P \in \mathbb{R}^{C \times M}\). We apply a softmax over the description dimension to obtain normalized weights
$\tilde{P}_c = \operatorname{softmax}(P_c),$
where \(P_c\) denotes the \(c\)-th row of \(P\).
The aggregated base logit for class \(c\) is computed as
\begin{equation}
    f_c(x^{\mathrm{T}})
    =
    \sum_{m=1}^{M}
    \tilde{P}_{c,m}
    \frac{1}{\tau}\hat{\mathbf{v}}^\top \hat{\mathbf{t}}_{c,m},
\end{equation}
where \(\hat{\mathbf{t}}_{c,m} \in \mathbb{R}^{D}\) is the normalized embedding of the \(m\)-th description for class \(c\), and \(\tilde{P}_{c,m}\) is its corresponding normalized weight. 

\paragraph{How to Model Inter-class Relationships.}

In fine-grained tasks, textual descriptions from different classes are often highly similar and may share dominant semantic directions. As a result, even after aggregating multiple descriptions within each class, the base logits are still formed independently across classes and can retain strong cross-class ambiguity: semantically related but incorrect classes may receive high scores because shared text semantics are not explicitly suppressed. 

Let $\mathbf{f}(x^{\mathrm{T}})
    =
    [f_1(x^{\mathrm{T}}), \dots, f_C(x^{\mathrm{T}})]
    \in \mathbb{R}^{C}$
denote the row vector of base logits.
To model these dependencies, we introduce an inter-class adjacency matrix \(E \in \mathbb{R}^{C \times C}\) and formulate the final class logits \(\mathbf{z} \in \mathbb{R}^{C}\) as a residual graph message-passing step:
\begin{equation}
    \label{eq:message_passing}
    \mathbf{z} = \mathbf{f}(x^{\text{T}}) + \mathbf{f}(x^{\text{T}}) E = \mathbf{f}(x^{\text{T}}) (I + E).
\end{equation}
Here, the term \(\mathbf{f}(x^{\text{T}})E\) models how evidence should be redistributed across correlated classes, allowing RVP to suppress shared semantic components and enhance subtle class-specific differences that cannot be recovered from independent intra-class aggregation alone.

The residual formulation in~\eqref{eq:message_passing} provides an important structural prior. Instead of learning a dense unconstrained transformation from scratch, the identity matrix \(I\) preserves the original class logits induced by CLIP's pretrained alignment, while the matrix \(E\), initialized to zero, learns only residual corrections between classes. This design constrains the model to refine, rather than overwrite, the pretrained semantic structure, making optimization easier and more stable in the few-shot regime. As a result, it reduces the effective complexity of the mapping and helps the model focus on the subtle discriminative differences among correlated classes.

All trainable components in RVP are learned jointly with the downstream cross-entropy loss. Specifically, the visual prompt \(\delta\), intra-class matrix \(P\), and inter-class matrix \(E\) are optimized end-to-end, while the CLIP image and text encoders remain frozen. The final logits \(\mathbf{z}_i\) are supervised by
\begin{equation}
\mathcal{L}_{\mathrm{CE}}(\delta,P,E)
=
-\frac{1}{N}
\sum_{i=1}^{N}
\log
\frac{\exp(z_{i,y_i^{T}})}
{\sum_{c=1}^{C}\exp(z_{i,c})},
\end{equation}
where \(z_{i,c}\) is the logit of class \(c\) for sample \(i\), and \(z_{i,y_i^{T}}\) is the logit of its ground-truth class.

\subsection{Inference Phase: Linear Reparameterization}
During inference, the intra-class aggregation and inter-class message-passing can be completely absorbed into a single projection matrix. We construct a sparse routing matrix $W_1 \in \mathbb{R}^{CM \times C}$ as a block-diagonal matrix:
\begin{equation}
\label{eq:w1_structure}
    W_1 = 
    \begin{bmatrix}
        \tilde{P}_1 & \vec{\mathbf{0}} & \cdots & \vec{\mathbf{0}} \\
        \vec{\mathbf{0}} & \tilde{P}_2 & \cdots & \vec{\mathbf{0}} \\
        \vdots & \vdots & \ddots & \vdots \\
        \vec{\mathbf{0}} & \vec{\mathbf{0}} & \cdots & \tilde{P}_C
    \end{bmatrix} \in \mathbb{R}^{CM \times C},
\end{equation}
where each $\tilde{P}_c = [\tilde{P}_{c,1}, \dots, \tilde{P}_{c,M}]^\top \in \mathbb{R}^M$ is the normalized weight vector for the $M$ descriptions of class $c$, and $\vec{\mathbf{0}}$ denotes a zero column vector of length $M$. By leveraging the associativity of matrix multiplication, we pre-calculate a unified classifier matrix $\hat{W} \in \mathbb{R}^{D \times C}$ that absorbs the text embeddings, routing weights, and inter-class correlations:
\begin{equation}
\label{eq:reparameterized_classifier}
    \hat{W} = \frac{1}{\tau} T^\top W_1 (I + E).
\end{equation}
The detailed structure of this reparameterized head can be unrolled as:
\begin{equation}
\label{eq:matrix_shape_final}
    \hat{W} = \frac{1}{\tau} 
    \underbrace{
    \begin{pmatrix} 
        \mid & \mid & & \mid \\
        \hat{\mathbf{t}}_{1,1} & \hat{\mathbf{t}}_{1,2} & \cdots & \hat{\mathbf{t}}_{C,M} \\
        \mid & \mid & & \mid 
    \end{pmatrix}
    }_{T^\top \in \mathbb{R}^{D \times CM}}
    \underbrace{
    \begin{pmatrix}
        \tilde{P}_1 & \vec{\mathbf{0}} & \cdots \\
        \vec{\mathbf{0}} & \tilde{P}_2 & \cdots \\
        \vdots & \vdots & \ddots 
    \end{pmatrix}
    }_{W_1 \in \mathbb{R}^{CM \times C}}
    \underbrace{
    \vphantom{\begin{pmatrix} \mid \\ \mathbf{t} \\ \mid \end{pmatrix}}
    (I + E)
    }_{\mathbb{R}^{C \times C}}.
\end{equation}
Consequently, the entire forward pass during inference is reduced to generating the single visual prompt, extracting the normalized image embedding via exactly one pass through the backbone, and performing a single linear projection: $\mathbf{z} = \hat{\mathbf{v}}^\top \hat{W}.$

This exact reparameterization replaces the standard aggregation weights with $\hat{W}$, guaranteeing that RVP identifies a more expressive mapping $\phi$ while adding nearly zero overhead during inference. 


\section{Experiments}
\label{sec:exp}

\paragraph{Experimental Setup and Benchmarks.}
To evaluate the proposed RVP framework, we adhere to the established experimental protocol from \citep{cai2025understanding}. We conduct all experiments using pretrained CLIP models across four image encoder architectures, including variants of ResNet~\citep{resnet} and Vision Transformer (ViT)~\citep{dosovitskiy2020vit}, under a 16-shot downstream classification setting. All reported results represent the average accuracy across three independent random seeds. Our benchmark suite comprises 11 datasets covering diverse visual domains, including textures, actions, and natural scenes. All datasets are publicly available: FGVC Aircraft (Aircraft)~\citep{aircraft}, Caltech101 (Caltech)~\citep{caltech101}, StanfordCars (Cars)~\citep{stanfordcars}, Describable Textures Dataset (DTD)~\citep{dtd}, EuroSAT (ESAT)~\citep{eurosat}, Flowers102 (Flowers)~\citep{flowers}, Food101 (Food)~\citep{food101}, OxfordPets (Pets)~\citep{parkhi2012cats}, SUN397 (SUN)~\citep{sun397}, UCF101 (UCF)~\citep{ucf101}, and RESISC45 (Resisc)~\citep{resisc}. More implementation details can be found in \cref{sec:implementation}.

\section{Results}
\label{sec:results}

\paragraph{Quantitative Results.}

We compare RVP against four prominent visual reprogramming (VR) baselines: 
(1) \textbf{VP} \citep{bahng2022exploring}, a standard VR approach that overlays learnable pixel perturbations onto rescaled downstream images; 
(2) \textbf{AR} \citep{tsai2020transfer,chen2023understanding}, which pads learnable noise parameters around the image boundary; 
(3) \textbf{AttrVR} \citep{cai2025attribute}, which guides the learning of visual prompt patterns using class-specific attribute descriptions; and 
(4) \textbf{DVP} \citep{cai2025understanding}, a decoupled visual prompting framework that ensembles multiple reprogrammed inputs. For a strictly fair comparison, we evaluate the unsupervised clustering variant of DVP (DVP-cls), which isolates the performance of the reprogramming mechanism without relying on external Large Language Models (LLMs) to generate cause-specific descriptions. RVP uses the same text prompt set as DVP.

As shown in \cref{tab:mainres,tab:rn50res,tab:backbone}, RVP achieves the highest average accuracy across all evaluated backbones. With ViT-B/16 CLIP (\cref{tab:mainres}), RVP attains the highest average accuracy of \(82.7\%\), outperforming DVP~\citep{cai2025understanding} by \(3.0\) points and AttrVR~\citep{cai2025attribute} by \(4.2\) points. It achieves the best result on 9 out of 11 datasets, with especially large gains on fine-grained benchmarks such as Aircraft (\(+7.4\) over DVP) and Cars (\(+14.0\)). These improvements are consistent with our motivation: in fine-grained recognition, many categories share highly similar semantic attributes, and the main discriminative cues arise from subtle inter-class differences. In this regime, intra-class prompt selection alone is insufficient, while RVP can explicitly suppress shared semantic components and amplify class-specific distinctions through inter-class modeling.

The advantage of RVP becomes even more pronounced with the weaker RN50 backbone. In \cref{tab:rn50res}, RVP achieves an average accuracy of \(72.3\%\), exceeding DVP by \(6.3\) points and AttrVR by \(7.7\) points. It again shows particularly strong gains on Aircraft and Cars, indicating that the proposed structured reparameterization remains effective even when the underlying visual encoder is less expressive.

\begin{table*}[t]
    \centering
    \caption{Accuracy comparison of different methods trained on 16-shot downstream classification tasks, using ViT-B/16-based CLIP as the pretrained model (Mean \% ± Std \%, ours are \textcolor{black}{\colorbox{gray!30}{highlighted}} and the highest result is in \textbf{bold}). Other results are taken from prior work~\citep{cai2025understanding}.}
    \begin{sc}
    \resizebox{\textwidth}{!}{
    \begin{tabular}{c|ccccccccccc|c}
    \toprule
    Method                & Aircraft      & Caltech       & Cars          & DTD           & ESAT          & Flowers       & Food          & Pets          & SUN           & UCF     & Resisc        & Avg.                           \\
    \midrule
     VP    & 32.1         & 93.5          & 65.5          & 61.4          & 91.2          & 82.5          & 82.3          & 91.0          & 65.8          & 73.8                & 79.1          & {74.4}          \\
    AR    & 31.7          & 95.5           & 68.0           & 62.0           & 93.4          & 85.9           & 85.2           & 92.7           & 67.9           & 78.1                 & 81.6           & {76.5}          \\
    AttrVR & {36.6}  & {95.7} & {68.3} & {65.6} & {93.8} & {92.9} & \textbf{85.9} & {93.3} & {69.6} & {79.0}  & {82.6} & {78.5} \\
    DVP & 38.7 & 96.0 & 70.8 & 65.5 & \textbf{94.1} & 95.0 & 85.7 & {93.3} & {71.1} & {82.0} & 84.4 & 79.7 \\
    \rowcolor{gray!30}
    RVP &
\textbf{46.1}\scriptsize{$\pm$0.2} & \textbf{96.5}\scriptsize{$\pm$0.2} & \textbf{84.8}\scriptsize{$\pm$0.3} & \textbf{68.7}\scriptsize{$\pm$0.1} & 92.7\scriptsize{$\pm$0.2} & \textbf{96.7}\scriptsize{$\pm$0.2} & 85.5\scriptsize{$\pm$0.1} & \textbf{94.0}\scriptsize{$\pm$0.1} & \textbf{73.8}\scriptsize{$\pm$0.1} & \textbf{85.1}\scriptsize{$\pm$0.7} & \textbf{85.9}\scriptsize{$\pm$0.5} & \textbf{82.7} \\
    \bottomrule
    \end{tabular}}
    \end{sc}
    \label{tab:mainres}
    \vspace{-0.1cm}
\end{table*}

\begin{table*}[!t]
    \centering
\caption{Accuracy comparison of different methods trained on 16-shot downstream classification tasks, using RN50-based CLIP as the pretrained model (Mean \% ± Std \%, ours are \textcolor{black}{\colorbox{gray!30}{highlighted}} and the highest result is in \textbf{bold}). Other results are taken from prior work~\citep{cai2025understanding}.}
\begin{sc}
\resizebox{\textwidth}{!}{
    \begin{tabular}{c|ccccccccccc|c}
    \toprule
    Method             & Aircraft      & Caltech       & Cars          & DTD           & ESAT          & Flowers       & Food          & Pets          & SUN           & UCF     & Resisc        & Avg.                           \\
    \midrule
 VP  & 16.2          & 80.1          & 44.0          & 43.4          & 59.7          & 53.6          & 65.3          & 77.2          & 48.8          & 52.0          & 47.7          & 53.5 \\
 AR    & 18.6          & 86.5          & 53.9          & 46.4          & 66.6          & 60.9          & 74.2          & 82.5          & 56.8          & 59.7          & 58.4          & 60.4 \\
AttrVR & 20.7          & 89.1          & 53.9          & 54.4          & 72.0          & 74.8          & \textbf{75.3} & 88.9          & 59.9          & 63.6          & 58.2          & 64.6 \\
DVP & 22.1          & 89.8          & 54.5          & 55.9          & 72.2          & 80.0          & 75.0          & {88.9} & {61.1} & {65.9} & {60.8} & 66.0 \\
\rowcolor{gray!30}
RVP & \textbf{29.1}\scriptsize{$\pm$0.3} & \textbf{92.0}\scriptsize{$\pm$0.0} & \textbf{71.0}\scriptsize{$\pm$0.3} & \textbf{62.0}\scriptsize{$\pm$0.3} & \textbf{72.9}\scriptsize{$\pm$0.9} & \textbf{91.6}\scriptsize{$\pm$0.2} & {73.7}\scriptsize{$\pm$0.0} & \textbf{89.7}\scriptsize{$\pm$0.1} & \textbf{66.2}\scriptsize{$\pm$0.1} & \textbf{74.3}\scriptsize{$\pm$0.1} & \textbf{72.9}\scriptsize{$\pm$0.1} & \textbf{72.3} \\
\bottomrule
\end{tabular}}
\end{sc}
\label{tab:rn50res}
\vspace{-0.3cm}
\end{table*}

\begin{wraptable}{r}{0.50\linewidth}
\vspace{-0.3cm}
\centering
\footnotesize
\caption{Average accuracy of different VR methods on 11 datasets using different CLIP visual encoders (mean accuracy in \%; ours are \textcolor{black}{\colorbox{gray!30}{highlighted}} and the highest is in \textbf{bold}; RN denotes ResNet).}
\vspace{-0.3cm}
\label{tab:backbone}
\begin{sc}
\setlength{\tabcolsep}{4pt}
\begin{tabular}{l|cccc}
\toprule
Method & RN50 & RN101 & ViT-B/32 & ViT-B/16 \\
\midrule
VP      & 53.5 & 57.5 & 68.3 & 74.4 \\
AR      & 60.4 & 62.7 & 66.3 & 76.5 \\
AttrVR  & 64.6 & 67.2 & 69.8 & 78.5 \\
DVP     & 66.0 & 68.8 & 71.0 & 79.7 \\
\rowcolor{gray!30}
RVP     & \textbf{72.3} & \textbf{73.0} & \textbf{75.8} & \textbf{82.7} \\ 
\bottomrule
\end{tabular}
\end{sc}
\vspace{-0.2cm}
\end{wraptable}

More broadly, \cref{tab:backbone} shows that the advantage of RVP becomes larger as the visual backbone becomes weaker. Compared with DVP, the gain is \(+6.3\) on RN50, \(+4.2\) on RN101, \(+4.8\) on ViT-B/32, and \(+3.0\) on ViT-B/16. This trend is expected, since weaker pretrained encoders produce less separable features, making downstream classification more reliant on the quality of the label mapping. In this setting, a structured mapping that explicitly models intra-class aggregation and inter-class relationships becomes more effective, as it can recover discriminative information that is not well separated in the original feature space. In contrast, stronger backbones such as ViT-B/16 already provide more discriminative and semantically aligned embeddings, leaving less room for improvement.

\paragraph{Few-shot Classification Performance.}

\begin{wrapfigure}{r}{0.45\textwidth}
    \centering
    \vspace{-0.5cm}
    \includegraphics[width=0.45\textwidth]{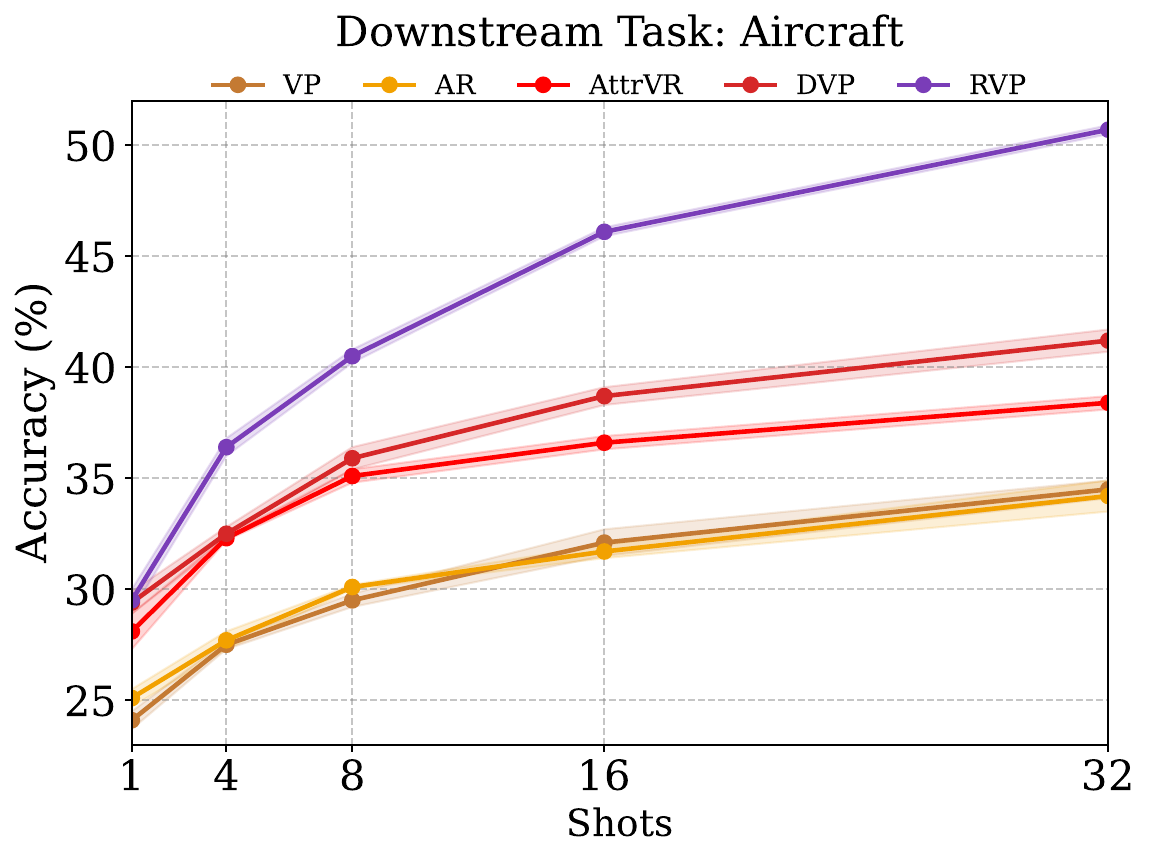}
    \vspace{-0.7cm}
    \caption{Accuracy comparison across different shot settings on Aircraft using ViT-B/16 CLIP. RVP consistently outperforms prior VR methods across all shot numbers. Shaded regions indicate standard deviation.}
    \vspace{-0.6cm}
    \label{fig:shots-aircraft}
\end{wrapfigure}
\cref{fig:shots-aircraft} compares few-shot performance on the Aircraft dataset under 1, 4, 8, 16, and 32 training samples per class. RVP achieves the best accuracy at every shot setting and shows a clear advantage over all prior visual reprogramming baselines. The improvement is modest in the extreme 1-shot setting, but becomes much larger as more labeled samples are available. At 32-shot, RVP attains 50.7\%, substantially outperforming DVP (41.2\%) and AttrVR (38.4\%).

This trend suggests that the proposed structured mapping can make better use of additional supervision than existing methods. While all approaches improve as the number of shots increases, the gain of RVP is much steeper, indicating stronger scalability from low-shot to moderately supervised settings. The relatively small standard deviations across all shot numbers also show that the improvements are stable over repeated runs. Overall, the figure shows that explicitly modeling both intra-class aggregation and inter-class relationships leads to more effective few-shot adaptation on fine-grained recognition tasks.

\paragraph{Accuracy and Inference Speed.}
RVP not only improves classification accuracy, but also retains the efficiency advantage of standard visual reprogramming. As shown in \cref{fig:rvp_bar}, RVP consistently achieves the highest accuracy on FGVC Aircraft across all four backbones, while maintaining latency close to single-prompt methods such as VP, AR, and AttrVR. In contrast, DVP relies on multiple decoupled visual prompts and therefore requires multiple forward passes through the frozen image encoder, leading to substantially higher inference latency. By using only a single visual prompt and reparameterizing the output mapping into one linear classifier, RVP avoids this overhead while still delivering stronger performance. This result shows that the gain of RVP comes from a more effective structured label mapping rather than increased inference-time computation.

\begin{figure*}[t]
    \centering
    \includegraphics[width=\linewidth]{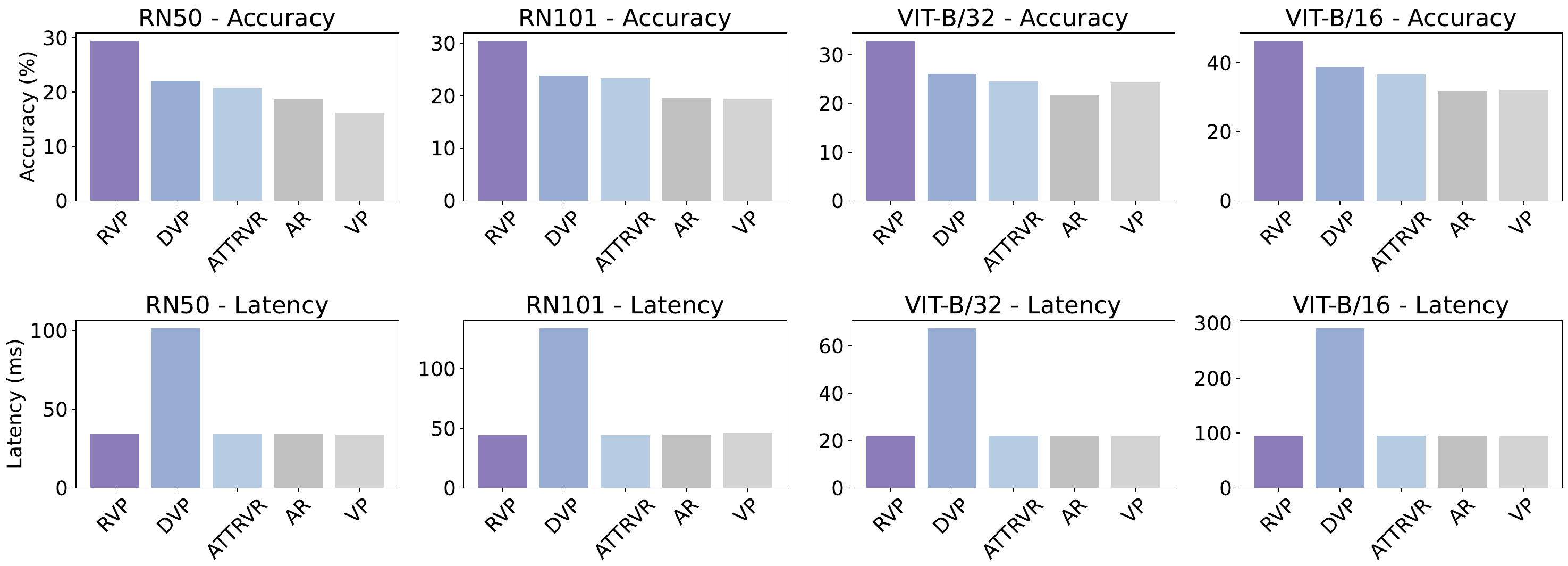}
    \vspace{-0.5cm}
    \caption{Accuracy and latency comparison on FGVC Aircraft across different backbones. Our method consistently achieves the highest accuracy while maintaining low latency, demonstrating a favorable trade-off between performance and efficiency.}
    \vspace{-0.2cm}
    \label{fig:rvp_bar}
\end{figure*}

\begin{table*}[!t]
\centering
\caption{Ablation studies of RVP using a ViT-B/16-based CLIP backbone.  The complete method is \textcolor{black}{\colorbox{gray!30}{highlighted}}, and the best results are shown in \textbf{bold}.}
\begin{sc}
\resizebox{\textwidth}{!}{
\begin{tabular}{c|ccccccccccc|c}
\toprule
Method                & Aircraft      & Caltech       & Cars          & DTD           & ESAT          & Flowers       & Food          & Pets          & SUN           & UCF     & Resisc        & Avg.                           \\
\midrule
\rowcolor{gray!30}
RVP & \textbf{46.1} & \textbf{96.5} & \textbf{84.8} & \textbf{68.7} & 92.7 & \textbf{96.7} & 85.5 & \textbf{94.0} & \textbf{73.8} & \textbf{85.1} & \textbf{85.9} & \textbf{82.7} \\
w/o VR & 40.1 & 96.2 & 81.7 & 65.7 & 58.5 & 96.8 & 84.7 & 93.9 & 74.2 & 83.6 & 82.8 & 78.0 \\
w/o intra-class $P$ & 46.0 & 96.3 & 84.7 & 68.3 & 92.6 & 96.8 & 85.5 & 94.1 & 73.8 & 84.4 & 85.6 & 82.6 \\
w/o inter-class $E$ & 35.6 & 96.1 & 68.0 & 63.3 & \textbf{93.8} & 91.7 & \textbf{85.6} & 93.1 & 67.4 & 78.9 & 83.5 & 77.9 \\ 
Linear Probe & 37.2 & 93.9 & 73.5 & 63.5 & 84.2 & 92.2 & 79.5 & 85.6 & 69.1 & 76.8 & 83.5 & 76.3 \\
Attribute \& LP & 37.4 & 90.9 & 78.1 & 56.4 & 59.4 & 86.2 & 73.9 & 90.7 & 67.9 & 68.7 & 76.6 & 71.5 \\
\bottomrule
\end{tabular}}
\end{sc}
\label{tab:ablation_vitb16}
\vspace{-0.3cm}
\end{table*}

\paragraph{Ablation Study.}

We conduct an ablation study in \cref{tab:ablation_vitb16} using a ViT-B/16-based CLIP backbone. In addition to the full RVP model, we evaluate five variants: w/o VR, which removes visual reprogramming and classifies zero-padded images only; w/o intra-class \(P\), which replaces the learnable intra-class weights with uniform averaging; w/o inter-class \(E\), which removes inter-class modeling; Linear Probe, which trains a linear classifier on the visual embedding; and Attribute \& LP, which learns a dense \(CM \times C\) prompt-reweighting matrix over all text descriptions. Linear Probe and Attribute \& LP are trained together with the visual prompt.

The complete RVP achieves the best average accuracy of \(82.7\%\), confirming that visual reprogramming, intra-class aggregation, and inter-class modeling work best together. Removing VR reduces the average accuracy to \(78.0\%\), with large drops on Aircraft, Cars, DTD, and especially ESAT, showing that input adaptation remains important. 
Removing the inter-class matrix \(E\) reduces the average accuracy to \(77.9\%\), which is the largest drop among all architectural ablations. The effect is especially strong on Aircraft and Cars, where classes share many attributes and differ only in subtle details. This shows that explicit inter-class modeling is the main source of improvement in RVP.

The comparisons with Linear Probe and Attribute \& LP further highlight the value of structured reparameterization. Linear Probe reaches \(76.3\%\), while Attribute \& LP performs even worse at \(71.5\%\), despite using a dense mapping with more parameters. This indicates that simply increasing trainable parameter size can lead to overfitting without structural constraints.

Overall, the ablation results show a clear pattern: the inter-class module \(E\) is the most important component, the visual prompt is also essential, and the intra-class matrix \(P\) provides a smaller but consistent gain. These results support the design of RVP and show that its advantage comes from structured modeling rather than a larger unconstrained classifier.

\paragraph{Qualitative Analysis.}

\begin{figure*}[t]
    \centering
    \includegraphics[width=\linewidth]{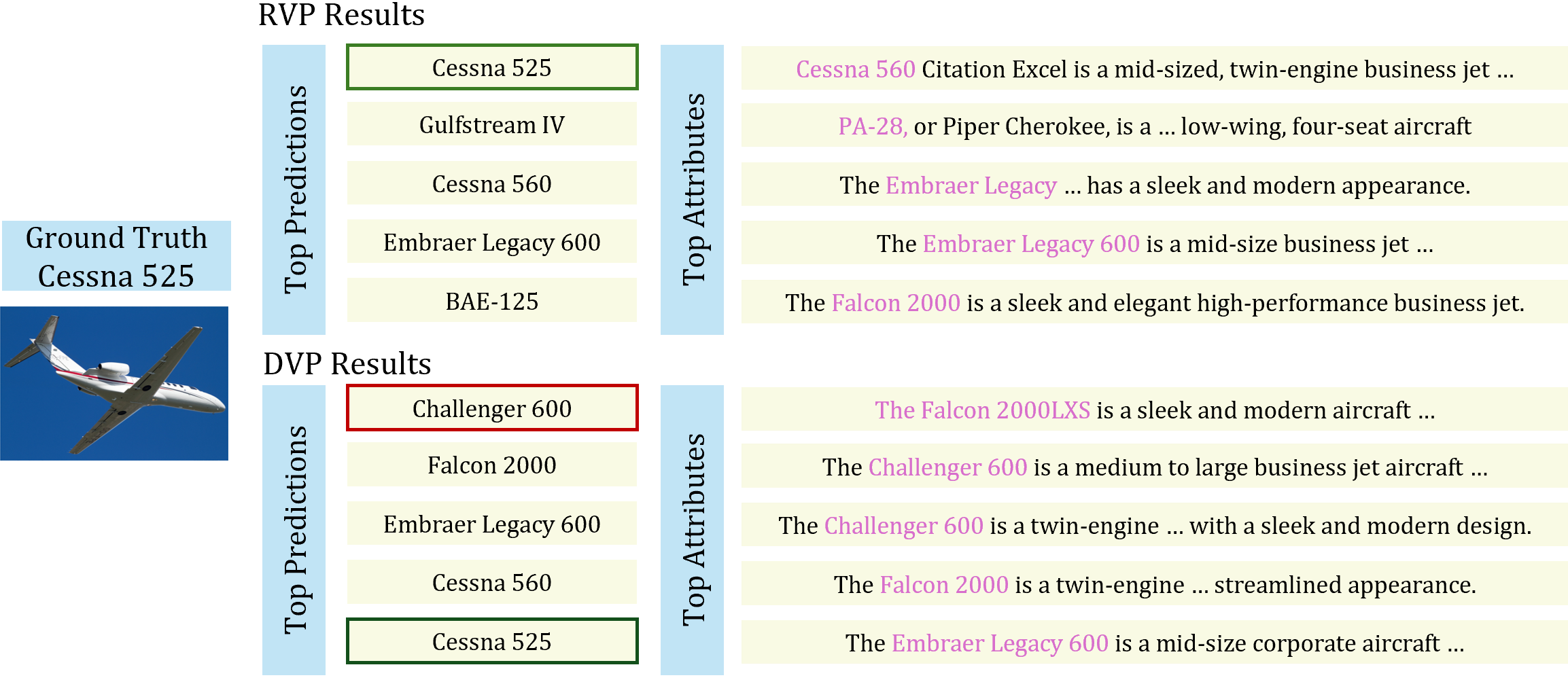}
    \caption{Top predicted classes and highest-matching attributes for a test image. For both RVP and DVP, the most similar attributes include prompts from other classes, reflecting strong semantic overlap in fine-grained recognition. However, RVP explicitly models inter-class relationships, allowing it to better resolve these cross-class ambiguities and produce the correct prediction.}
    \label{fig:vis1}
    \vspace{-0.4cm}
\end{figure*}

\cref{fig:vis1} provides a qualitative comparison between RVP and DVP on a fine-grained aircraft example. In both methods, the highest-matching attributes include prompts from semantically similar but incorrect classes. This is expected because attribute prompts are largely class-agnostic: descriptions such as \textit{sleek}, \textit{streamlined}, or \textit{twin-engine} are often shared by multiple aircraft categories and therefore cannot uniquely identify a class on their own. As a result, relying only on prompt-level similarity can lead to ambiguous predictions. RVP addresses this issue by explicitly modeling inter-class relationships, allowing it to suppress misleading shared semantics and rank the ground-truth class Cessna 525 at the top, whereas DVP places it only in the fifth position.



\section{Limitations}
\label{sec:limitation}
Although RVP performs strongly on most benchmarks, its performance is less pronounced on Food101. Unlike fine-grained object categories, food images often include not only the main dish but also side dishes, garnish, sauces, and other accompanying ingredients, which makes class semantics less stable across samples. This high intra-class variation weakens the consistency of text-based class relationships and reduces the advantage of our structured inter-class modeling. A more detailed discussion is provided in \cref{app:limitation}.

\section{Conclusion}

We proposed Reparameterized Inter-Class Visual Reprogramming (RVP), which adapts frozen vision--language models through structured intra-class aggregation and inter-class modeling. RVP admits exact reparameterization into a single linear classifier at inference. Across 11 benchmarks and multiple CLIP backbones, RVP consistently improves over prior visual reprogramming methods, with the largest gains on fine-grained tasks. These results highlight the benefit of modeling class relationships in visual reprogramming and motivate extending RVP beyond CLIP. We will explore the application of RVP beyond CLIP in future work.


\bibliography{iclr2027_conference}

@inproceedings{chen2023understanding,
  title     = {Understanding and improving visual prompting: A label-mapping perspective},
  author    = {Chen, Aochuan and Yao, Yuguang and Chen, Pin-Yu and Zhang, Yihua and Liu, Sijia},
  booktitle = {CVPR},
  year      = {2023}
}

@inproceedings{tsao2023autovp,
  title     = {Autovp: An automated visual prompting framework and benchmark},
  author    = {Tsao, Hsi-Ai and Hsiung, Lei and Chen, Pin-Yu and Liu, Sijia and Ho, Tsung-Yi},
  booktitle = {ICLR},
  year      = {2024}
}

@inproceedings{tsai2020transfer,
  title     = {Transfer Learning without Knowing: Reprogramming Black-box Machine Learning Models with Scarce Data and Limited Resources},
  author    = {Tsai, Yun-Yun and Chen, Pin-Yu and Ho, Tsung-Yi},
  booktitle = {ICML},
  year      = {2020}
}

@inproceedings{chen2024model,
  title     = {Model reprogramming: Resource-efficient cross-domain machine learning},
  author    = {Chen, Pin-Yu},
  booktitle = {AAAI},
  year      = {2024}
}

@inproceedings{cai2024sample,
  title     = {Sample-specific Masks for Visual Reprogramming-based Prompting},
  author    = {Cai, Chengyi and Ye, Zesheng and Feng, Lei and Qi, Jianzhong and Liu, Feng},
  booktitle = {ICML},
  year      = {2024}
}

@inproceedings{cai2024bayesian,
  title     = {Bayesian-guided Label Mapping for Visual Reprogramming},
  author    = {Cai, Chengyi and Ye, Zesheng and Feng, Lei and Qi, Jianzhong and Liu, Feng},
  booktitle = {NeurIPS},
  year      = {2024}
}

@inproceedings{elsayedadversarial,
  title     = {Adversarial Reprogramming of Neural Networks},
  author    = {Elsayed, Gamaleldin F and Goodfellow, Ian and Sohl-Dickstein, Jascha},
  booktitle = {ICLR},
  year      = {2018}
}

@inproceedings{oh2023blackvip,
  title     = {Blackvip: Black-box visual prompting for robust transfer learning},
  author    = {Oh, Changdae and Hwang, Hyeji and Lee, Hee-young and Lim, YongTaek and Jung, Geunyoung and Jung, Jiyoung and Choi, Hosik and Song, Kyungwoo},
  booktitle = {CVPR},
  year      = {2023}
}

@article{bahng2022exploring,
  title   = {Exploring visual prompts for adapting large-scale models},
  author  = {Bahng, Hyojin and Jahanian, Ali and Sankaranarayanan, Swami and Isola, Phillip},
  journal = {arXiv},
  year    = {2022}
}

@article{aircraft,
  title   = {Fine-grained visual classification of aircraft},
  author  = {Maji, Subhransu and Rahtu, Esa and Kannala, Juho and Blaschko, Matthew and Vedaldi, Andrea},
  journal = {arXiv},
  year    = {2013}
}

@inproceedings{caltech101,
  title     = {Learning generative visual models from few training examples: An incremental bayesian approach tested on 101 object categories},
  author    = {Fei-Fei, Li and Fergus, Rob and Perona, Pietro},
  booktitle = {CVPR workshop},
  year      = {2004}
}

@inproceedings{stanfordcars,
  title     = {3d object representations for fine-grained categorization},
  author    = {Krause, Jonathan and Stark, Michael and Deng, Jia and Fei-Fei, Li},
  booktitle = {ICCV workshops},
  year      = {2013}
}

@inproceedings{flowers,
  title     = {Automated flower classification over a large number of classes},
  author    = {Nilsback, Maria-Elena and Zisserman, Andrew},
  booktitle = {Indian Conference on Computer Vision, Graphics \& Image Processing},
  year      = {2008}
}

@inproceedings{food101,
  title     = {Food-101--mining discriminative components with random forests},
  author    = {Bossard, Lukas and Guillaumin, Matthieu and Van Gool, Luc},
  booktitle = {ECCV},
  year      = {2014}
}

@inproceedings{sun397,
  title     = {Sun database: Large-scale scene recognition from abbey to zoo},
  author    = {Xiao, Jianxiong and Hays, James and Ehinger, Krista A and Oliva, Aude and Torralba, Antonio},
  booktitle = {CVPR},
  year      = {2010}
}

@inproceedings{dtd,
  title     = {Describing textures in the wild},
  author    = {Cimpoi, Mircea and Maji, Subhransu and Kokkinos, Iasonas and Mohamed, Sammy and Vedaldi, Andrea},
  booktitle = {CVPR},
  year      = {2014}
}

@article{eurosat,
  title   = {Eurosat: A novel dataset and deep learning benchmark for land use and land cover classification},
  author  = {Helber, Patrick and Bischke, Benjamin and Dengel, Andreas and Borth, Damian},
  journal = {IEEE Journal of Selected Topics in Applied Earth Observations and Remote Sensing},
  year    = {2019}
}

@article{resisc,
  title   = {Remote sensing image scene classification: Benchmark and state of the art},
  author  = {Cheng, Gong and Han, Junwei and Lu, Xiaoqiang},
  journal = {Proceedings of the IEEE},
  year    = {2017}
}

@article{ucf101,
  title   = {A dataset of 101 human action classes from videos in the wild},
  author  = {Soomro, Khurram and Zamir, Amir Roshan and Shah, Mubarak},
  journal = {Center for Research in Computer Vision},
  year    = {2012}
}

@article{kirkpatrick2017overcoming,
  title   = {Overcoming catastrophic forgetting in neural networks},
  author  = {Kirkpatrick, James and Pascanu, Razvan and Rabinowitz, Neil and Veness, Joel and Desjardins, Guillaume and Rusu, Andrei A and Milan, Kieran and Quan, John and Ramalho, Tiago and Grabska-Barwinska, Agnieszka and others},
  journal = {PNAS},
  year    = {2017}
}

@inproceedings{hambardzumyan2021warp,
  title     = {WARP: Word-level adversarial reprogramming},
  author    = {Hambardzumyan, K and Khachatrian, H and May, J},
  booktitle = {ACL-IJCNLP},
  year      = {2021}
}

@inproceedings{vinod2020reprogramming,
  title     = {Reprogramming Language Models for Molecular Representation Learning},
  author    = {Vinod, Ria and Chen, Pin-Yu and Das, Payel},
  booktitle = {NeurIPS},
  year      = {2020}
}

@inproceedings{hung2023low,
  title     = {Low-resource music genre classification with cross-modal neural model reprogramming},
  author    = {Hung, Yun-Ning and Yang, Chao-Han Huck and Chen, Pin-Yu and Lerch, Alexander},
  booktitle = {ICASSP},
  year      = {2023}
}

@inproceedings{yang2023english,
  title     = {From english to more languages: Parameter-efficient model reprogramming for cross-lingual speech recognition},
  author    = {Yang, Chao-Han Huck and Li, Bo and Zhang, Yu and Chen, Nanxin and Prabhavalkar, Rohit and Sainath, Tara N and Strohman, Trevor},
  booktitle = {ICASSP},
  year      = {2023}
}

@inproceedings{yang2021voice2series,
  title     = {Voice2series: Reprogramming acoustic models for time series classification},
  author    = {Yang, Chao-Han Huck and Tsai, Yun-Yun and Chen, Pin-Yu},
  booktitle = {ICML},
  year      = {2021}
}

@inproceedings{jing2023deep,
  title     = {Deep graph reprogramming},
  author    = {Jing, Yongcheng and Yuan, Chongbin and Ju, Li and Yang, Yiding and Wang, Xinchao and Tao, Dacheng},
  booktitle = {CVPR},
  year      = {2023}
}

@inproceedings{zhou2022conditional,
  title     = {Conditional prompt learning for vision-language models},
  author    = {Zhou, Kaiyang and Yang, Jingkang and Loy, Chen Change and Liu, Ziwei},
  booktitle = {CVPR},
  year      = {2022}
}

@article{zhou2022learning,
  title   = {Learning to prompt for vision-language models},
  author  = {Zhou, Kaiyang and Yang, Jingkang and Loy, Chen Change and Liu, Ziwei},
  journal = {IJCV},
  year    = {2022}
}

@inproceedings{khattak2023maple,
  title     = {Maple: Multi-modal prompt learning},
  author    = {Khattak, Muhammad Uzair and Rasheed, Hanoona and Maaz, Muhammad and Khan, Salman and Khan, Fahad Shahbaz},
  booktitle = {CVPR},
  year      = {2023}
}

@inproceedings{wang2023transhp,
  title     = {Transhp: Image classification with hierarchical prompting},
  author    = {Wang, Wenhao and Sun, Yifan and Li, Wei and Yang, Yi},
  booktitle = {NeurIPS},
  year      = {2023}
}

@inproceedings{huang2023diversity,
  title     = {Diversity-aware meta visual prompting},
  author    = {Huang, Qidong and Dong, Xiaoyi and Chen, Dongdong and Zhang, Weiming and Wang, Feifei and Hua, Gang and Yu, Nenghai},
  booktitle = {CVPR},
  year      = {2023}
}

@inproceedings{radford2021learning,
  title     = {Learning transferable visual models from natural language supervision},
  author    = {Radford, Alec and Kim, Jong Wook and Hallacy, Chris and Ramesh, Aditya and Goh, Gabriel and Agarwal, Sandhini and Sastry, Girish and Askell, Amanda and Mishkin, Pamela and Clark, Jack and others},
  booktitle = {ICML},
  year      = {2021}
}

@inproceedings{parkhi2012cats,
  title     = {Cats and dogs},
  author    = {Parkhi, Omkar M and Vedaldi, Andrea and Zisserman, Andrew and Jawahar, CV},
  booktitle = {CVPR},
  year      = {2012}
}

@inproceedings{loshchilov2016sgdr,
title={{SGDR}: Stochastic Gradient Descent with Warm Restarts},
author={Ilya Loshchilov and Frank Hutter},
booktitle={International Conference on Learning Representations},
year={2017},
url={https://openreview.net/forum?id=Skq89Scxx}
}

@inproceedings{cai2025attribute,
title={Attribute-based Visual Reprogramming for Vision-Language Models},
author={Chengyi Cai and Zesheng Ye and Lei Feng and Jianzhong Qi and Feng Liu},
booktitle={The Thirteenth International Conference on Learning Representations},
year={2025},
url={https://openreview.net/forum?id=j964C6y92q}
}

@inproceedings{Zhang_2024_CVPR,
  author    = {Zhang, Yichi and Dong, Yinpeng and Zhang, Siyuan and Min, Tianzan and Su, Hang and Zhu, Jun},
  title     = {Exploring the Transferability of Visual Prompting for Multimodal Large Language Models},
  booktitle = {CVPR},
  year      = {2024}
}

@inproceedings{jin2025lorvp,
  title     = {LoR-{VP}: Low-Rank Visual Prompting for Efficient Vision Model Adaptation},
  author    = {Can Jin and Ying Li and Mingyu Zhao and Shiyu Zhao and Zhenting Wang and Xiaoxiao He and Ligong Han and Tong Che and Dimitris N. Metaxas},
  booktitle = {ICLR},
  year      = {2025}
}

@inproceedings{zheng2025endow,
  title     = {Endowing Visual Reprogramming with Adversarial Robustness},
  author    = {Shengjie Zhou and Xin Cheng and Haiyang Xu and Ming Yan and Tao Xiang and Feng Liu and Lei Feng},
  booktitle = {ICLR},
  year      = {2025}
}

@inproceedings{chen2025refine,
  title     = {REFINE: Inversion-Free Backdoor Defense via Model Reprogramming},
  author    = {Yukun Chen and Shuo Shao and Enhao Huang and Yiming Li and Pin-Yu Chen and Zhan Qin and Kui Ren},
  booktitle = {ICLR},
  year      = {2025}
}

@inproceedings{jin2024time,
  title     = {{Time-LLM}: Time series forecasting by reprogramming large language models},
  author    = {Ming Jin and Shiyu Wang and Lintao Ma and Zhixuan Chu and James Y. Zhang and Xiaoming Shi and Pin-Yu Chen and Yuxuan Liang and Yuan-Fang Li and Shirui Pan and Qingsong Wen},
  booktitle = {ICLR},
  year      = {2024}
}

@inproceedings{yen2023neural,
  title     = {Neural Model Reprogramming with Similarity Based Mapping for Low-Resource Spoken Command Classification},
  author    = {Hao Yen and Pin-Jui Ku and Chao-Han Huck Yang and Hu Hu and Sabato Marco Siniscalchi and Pin-Yu Chen and Yu Tsao},
  booktitle = {INTERSPEECH},
  year      = {2023}
}

@inproceedings{cai2025understanding,
    title={Understanding Model Reprogramming for CLIP via Decoupling Visual Prompts},
    author={Chengyi Cai and Zesheng Ye and Lei Feng and Jianzhong Qi and Feng Liu},
    booktitle = {International Conference on Machine Learning},
    year={2025}
}

@InProceedings{pmlr-v139-jia21b,
  title = 	 {Scaling Up Visual and Vision-Language Representation Learning With Noisy Text Supervision},
  author =       {Jia, Chao and Yang, Yinfei and Xia, Ye and Chen, Yi-Ting and Parekh, Zarana and Pham, Hieu and Le, Quoc and Sung, Yun-Hsuan and Li, Zhen and Duerig, Tom},
  booktitle = 	 {Proceedings of the 38th International Conference on Machine Learning},
  pages = 	 {4904--4916},
  year = 	 {2021},
  editor = 	 {Meila, Marina and Zhang, Tong},
  volume = 	 {139},
  series = 	 {Proceedings of Machine Learning Research},
  month = 	 {18--24 Jul},
  publisher =    {PMLR},
  url = 	 {https://proceedings.mlr.press/v139/jia21b.html}
}

@article{clip-adapter,
author = {Gao, Peng and Geng, Shijie and Zhang, Renrui and Ma, Teli and Fang, Rongyao and Zhang, Yongfeng and Li, Hongsheng and Qiao, Yu},
title = {CLIP-Adapter: Better Vision-Language Models with Feature Adapters},
year = {2024},
issue_date = {Feb 2024},
publisher = {Kluwer Academic Publishers},
address = {USA},
volume = {132},
number = {2},
issn = {0920-5691},
url = {https://doi.org/10.1007/s11263-023-01891-x},
doi = {10.1007/s11263-023-01891-x},
journal = {International Journal of Computer Vision},
pages = {581–595},
numpages = {15}
}

@InProceedings{logitdeconfusion_2025_CVPR,
    author    = {Li, Shuo and Liu, Fang and Hao, Zehua and Wang, Xinyi and Li, Lingling and Liu, Xu and Chen, Puhua and Ma, Wenping},
    title     = {Logits DeConfusion with CLIP for Few-Shot Learning},
    booktitle = {Proceedings of the Computer Vision and Pattern Recognition Conference (CVPR)},
    month     = {June},
    year      = {2025},
    pages     = {25411-25421}
}

@inproceedings{tip_adapter,
author = {Zhang, Renrui and Zhang, Wei and Fang, Rongyao and Gao, Peng and Li, Kunchang and Dai, Jifeng and Qiao, Yu and Li, Hongsheng},
title = {Tip-Adapter: Training-Free Adaption of CLIP for Few-Shot Classification},
year = {2022},
isbn = {978-3-031-19832-8},
publisher = {Springer-Verlag},
address = {Berlin, Heidelberg},
url = {https://doi.org/10.1007/978-3-031-19833-5_29},
doi = {10.1007/978-3-031-19833-5_29},
booktitle = {Computer Vision – ECCV 2022: 17th European Conference, Tel Aviv, Israel, October 23–27, 2022, Proceedings, Part XXXV},
pages = {493–510},
numpages = {18},
location = {Tel Aviv, Israel}
}

@INPROCEEDINGS{proto_clip,
  author={P, Jishnu Jaykumar and Palanisamy, Kamalesh and Chao, Yu-Wei and Du, Xinya and Xiang, Yu},
  booktitle={2024 IEEE/RSJ International Conference on Intelligent Robots and Systems (IROS)}, 
  title={Proto-CLIP: Vision-Language Prototypical Network for Few-Shot Learning}, 
  year={2024},
  volume={},
  number={},
  pages={2594-2601},
  doi={10.1109/IROS58592.2024.10801660}}

@INPROCEEDINGS{resnet,
  author={He, Kaiming and Zhang, Xiangyu and Ren, Shaoqing and Sun, Jian},
  booktitle={2016 IEEE Conference on Computer Vision and Pattern Recognition (CVPR)}, 
  title={Deep Residual Learning for Image Recognition}, 
  year={2016},
  volume={},
  number={},
  pages={770-778},
  doi={10.1109/CVPR.2016.90}}

@article{dosovitskiy2020vit,
  title={An Image is Worth 16x16 Words: Transformers for Image Recognition at Scale},
  author={Dosovitskiy, Alexey and Beyer, Lucas and Kolesnikov, Alexander and Weissenborn, Dirk and Zhai, Xiaohua and Unterthiner, Thomas and  Dehghani, Mostafa and Minderer, Matthias and Heigold, Georg and Gelly, Sylvain and Uszkoreit, Jakob and Houlsby, Neil},
  journal={ICLR},
  year={2021}
}

@INPROCEEDINGS{repvgg,
  author={Ding, Xiaohan and Zhang, Xiangyu and Ma, Ningning and Han, Jungong and Ding, Guiguang and Sun, Jian},
  booktitle={2021 IEEE/CVF Conference on Computer Vision and Pattern Recognition (CVPR)}, 
  title={RepVGG: Making VGG-style ConvNets Great Again}, 
  year={2021},
  volume={},
  number={},
  pages={13728-13737},
  doi={10.1109/CVPR46437.2021.01352}}

@misc{luo2023towards,
      title={Towards Efficient Visual Adaption via Structural Re-parameterization}, 
      author={Gen Luo and Minglang Huang and Yiyi Zhou and Xiaoshuai Sun and Guannan Jiang and Zhiyu Wang and Rongrong Ji},
      year={2023},
      eprint={2302.08106},
      archivePrefix={arXiv},
      primaryClass={cs.CV},
      url={https://arxiv.org/abs/2302.08106}, 
}

@inproceedings{yu2023task,
  title={Task Residual for Tuning Vision-Language Models},
  author={Yu, Tao and Lu, Zhihe and Jin, Xin and Chen, Zhibo and Wang, Xinchao},
  booktitle={Proceedings of the IEEE/CVF Conference on Computer Vision and Pattern Recognition},
  pages={10899--10909},
  year={2023}
}

@inproceedings{huang2024lp,
    title={LP++: A Surprisingly Strong Linear Probe for Few-Shot CLIP},
    author={Yunshi Huang and Fereshteh Shakeri and Jose Dolz and Malik Boudiaf and Houda Bahig and Ismail Ben Ayed},
    booktitle={IEEE/CVF Conference on Computer Vision and Pattern Recognition (CVPR)},
    year={2024}
    }

@InProceedings{wu2026dga,
    author    = {Wu, Jiayang and Chen, Xinyang and Lv, Ke and Guan, Weili},
    title     = {Boosting Visual Reprogramming for CLIP with Dual Granularity Alignment},
    booktitle = {Proceedings of the IEEE/CVF Conference on Computer Vision and Pattern Recognition (CVPR)},
    month     = {June},
    year      = {2026},
    pages     = {29347-29356}
}
\bibliographystyle{iclr2027_conference}

\appendix
\newpage

\section{Dataset Information}
\label{app:data_info}

\begin{table*}[h]
\centering
\caption{Summary of the 11 downstream benchmark datasets used in our experiments, including task type, number of classes, and training batch size.}
\vspace{0.15cm}
\resizebox{\textwidth}{!}{
\begin{tabular}{c|ccccccccccc}
\toprule

 & \sc Aircraft      & \sc Caltech       &\sc Cars          &\sc DTD      &\sc ESAT       &\sc Flowers       &\sc Food          &\sc Pets          &\sc SUN           &\sc UCF            &\sc Resisc              \\

 \midrule
 \makecell{\sc Task \\ \sc Info.} & \makecell{aircraft \\ model} & object & \makecell{fine-grained \\ automobile} & texture & \makecell{remote sensing \\ land cover} & flower & food & pet & scene & action &  \makecell{remote \\ sensing scene} \\
 \makecell{\sc Class Number } & 100      & 100       & 196          & 47      & 10       & 102       & 101          & 37          & 397           & 101            & 45              \\
  \makecell{\sc Batch Size } & 64      & 64       & 64          & 64      & 64       & 64       & 64          & 64          & 64           & 64           & 64              \\
\bottomrule
\end{tabular}}
\label{tab:data_info}
\end{table*}

Following prior works~\citep{cai2025attribute,cai2025understanding}, we adopt the same 16-shot benchmark protocol. The benchmark covers 11 publicly available datasets spanning diverse recognition tasks, including fine-grained object recognition, generic object classification, texture recognition, scene understanding, action recognition, and remote sensing. Specifically, we use FGVC Aircraft (Aircraft)~\citep{aircraft}, Caltech101 (Caltech)~\citep{caltech101}, StanfordCars (Cars)~\citep{stanfordcars}, Describable Textures Dataset (DTD)~\citep{dtd}, EuroSAT (ESAT)~\citep{eurosat}, Flowers102 (Flowers)~\citep{flowers}, Food101 (Food)~\citep{food101}, OxfordPets (Pets)~\citep{parkhi2012cats}, SUN397 (SUN)~\citep{sun397}, UCF101 (UCF)~\citep{ucf101}, and RESISC45 (Resisc)~\citep{resisc}. As summarized in \cref{tab:data_info}, these datasets vary substantially in semantic granularity and class cardinality, ranging from 10 classes in EuroSAT to 397 classes in SUN397. Unless otherwise specified, we use a batch size of 64 for training visual reprogramming on all datasets.

\section{Additional Results}
\subsection{Additional Results on Different Backbones}

\begin{table*}[h]
\centering
\caption{Accuracy comparison of different methods trained on 16-shot downstream classification tasks, using RN101-based CLIP as the pretrained model (Mean \%, ours are \textcolor{black}{\colorbox{gray!30}{highlighted}} and the highest is in \textbf{bold}).}
\begin{sc}
\resizebox{\textwidth}{!}{
\begin{tabular}{c|ccccccccccc|c}
\toprule
Method   & Aircraft      & Caltech       & Cars          & DTD           & ESAT          & Flowers       & Food          & Pets          & SUN           & UCF     & Resisc        & Avg.                           \\
\midrule
 VP   & 19.3          & 83.0          & 53.7          & 43.4          & 62.8          & 57.2          & 71.2          & 80.2          & 53.5          & 54.2          & 54.0          & 57.5 \\
 AR     & 19.5          & 89.7          & 62.0          & 46.3          & 70.4          & 60.4          & 78.0          & 84.4          & 58.4          & 60.6          & 60.2          & 62.7 \\
AttrVR & 23.3          & 92.0          & 62.2          & 55.6          & 70.3          & 76.2          & \textbf{79.5} & 89.3          & 62.1          & 64.5          & 64.5          & 67.2 \\
DVP & 23.8          & 92.7          & 62.5          & 58.0          & \textbf{70.7}          & 80.6          & 79.1          & 89.5          & {63.7} & {68.1} & {68.4} & 68.8 \\
\rowcolor{gray!30} 
RVP & \textbf{30.4} & \textbf{93.9} & \textbf{76.0} & \textbf{60.1} & 67.6 & \textbf{90.5} & 76.8 & \textbf{91.1} & \textbf{67.5} & \textbf{75.8} & \textbf{73.4} & \textbf{73.0} \\
\bottomrule
\end{tabular}}
\end{sc}
\label{tab:rn101res}
\end{table*}

\begin{table*}[!h]
\centering
\caption{Accuracy comparison of different methods trained on 16-shot downstream classification tasks, using ViT-B/32-based CLIP as the pretrained model (Mean \%, ours are \textcolor{black}{\colorbox{gray!30}{highlighted}} and the highest is in \textbf{bold}).}
\begin{sc}
\resizebox{\textwidth}{!}{
\begin{tabular}{c|ccccccccccc|c}
\toprule
Method                & Aircraft      & Caltech       & Cars          & DTD           & ESAT          & Flowers       & Food          & Pets          & SUN           & UCF     & Resisc        & Avg.                           \\
\midrule
 VP  & 24.3          & 92.3          & 58.6          & 54.9          & 85.9          & 71.2          & 75.0          & 86.8          & 61.0          & 67.3          & 73.9          & 68.3 \\
 AR    & 21.8          & 92.7          & 56.9          & 49.9          & 85.6          & 66.7          & 75.7          & 84.7          & 59.9          & 63.5          & 71.6          & 66.3 \\
AttrVR & 24.5          & 92.0          & 56.6          & 56.8          & \textbf{88.6}          & 77.8          & \textbf{77.2} & 89.8          & 62.8          & 67.9          & 73.9          & 69.8 \\
DVP & 26.1          & 92.9          & 56.5          & 57.2          & 88.5          & 82.5          & 77.0          & 89.2          & {64.2} & {70.5} & {76.0} & 71.0 \\
\rowcolor{gray!30}
RVP & \textbf{32.8} & \textbf{94.1} & \textbf{74.1} & \textbf{63.4} & 86.4 & \textbf{93.3} & 74.8 & \textbf{90.3} & \textbf{68.0} & \textbf{77.8} & \textbf{79.1} & \textbf{75.8} \\
\bottomrule
\end{tabular}}
\end{sc}
\label{tab:vitb32res}
\end{table*}

We further evaluate RVP on RN101- and ViT-B/32-based CLIP backbones in \cref{tab:rn101res,tab:vitb32res}. The results remain consistent with the main experiments: RVP achieves the best average accuracy on both backbones and outperforms all previous visual reprogramming baselines by a clear margin.

With the RN101 backbone, RVP reaches an average accuracy of {73.0\%}, improving over DVP by \(4.2\) points and over AttrVR by \(5.8\) points. It achieves the best result on 9 out of 11 datasets, with especially large gains on Aircraft (\(+6.6\) over DVP), Cars (\(+13.5\)), Flowers (\(+9.9\)), UCF (\(+7.7\)), and Resisc (\(+5.0\)). These results again show that the proposed structured mapping is particularly effective when the downstream task requires distinguishing semantically similar categories. The only datasets where RVP does not achieve the best performance are ESAT and Food, where the advantage of explicit inter-class modeling appears less pronounced.

A similar pattern is observed for the ViT-B/32 backbone. RVP obtains the highest average accuracy of {75.8\%}, surpassing DVP by \(4.8\) points and AttrVR by \(6.0\) points. It performs best on 9 out of 11 datasets and shows especially large improvements on Aircraft (\(+6.7\) over DVP), Cars (\(+17.6\)), Flowers (\(+10.8\)), UCF (\(+7.3\)), and DTD (\(+6.2\)). Notably, the gain on Cars remains very large even with the stronger ViT-based encoder, further supporting our claim that explicit modeling of inter-class relationships is particularly beneficial for fine-grained recognition.

Taken together, these additional results strengthen two observations from the main paper. First, the advantage of RVP is robust across both convolutional and transformer backbones. Second, the largest improvements consistently appear on fine-grained datasets such as Aircraft and Cars, where many categories share highly similar semantic attributes and cannot be reliably separated by intra-class prompt selection alone. This further supports the central motivation of RVP: when the pretrained feature space contains strong inter-class correlation, a structured mapping that explicitly models class relationships provides a more effective adaptation mechanism than independent prompt aggregation.

\subsection{Broader Comparison}

Although RVP follows a different adaptation paradigm from conventional CLIP adaptation methods, we further compare it with several representative approaches under the same 16-shot setting. These methods include prompt learning, feature adaptation, task residual learning, and linear probing, while RVP performs adaptation through visual reprogramming with structured label mapping.

As shown in Table~\ref{tab:more}, RVP achieves an average accuracy of $82.7\%$, showing competitive performance across the 11 datasets. It performs particularly well on Aircraft and Cars, reaching $46.1\%$ and $84.8\%$, respectively, while also obtaining strong results on Caltech, EuroSAT, Pets, UCF, and RESISC. Although the compared methods use different adaptation strategies, this broader comparison shows that RVP remains effective when evaluated alongside general CLIP adaptation approaches.

\begin{table*}[h]
    \centering
    \caption{Accuracy comparison of different methods trained on 16-shot downstream classification tasks, using ViT-B/16-based CLIP as the pretrained model (Mean \% $\pm$ Std \%, ours are \textcolor{black}{\colorbox{gray!30}{highlighted}} and the highest result is in \textbf{bold}).}
    \begin{sc}
    \resizebox{\textwidth}{!}{
    \begin{tabular}{c|ccccccccccc|c}
    \toprule
    Method
    & Aircraft
    & Caltech
    & Cars
    & DTD
    & EuroSAT
    & Flowers
    & Food
    & Pets
    & SUN
    & UCF
    & RESISC
    & Avg. \\
    \midrule

    CoOp
    & 43.2
    & 95.8
    & 82.9
    & 69.7
    & 85.0
    & 96.8
    & 84.2
    & 92.0
    & 74.9
    & 83.1
    & 84.7
    & 81.1 \\

    CoCoOp
    & 33.3
    & 95.1
    & 72.3
    & 63.7
    & 73.6
    & 89.1
    & \textbf{87.4}
    & 93.4
    & 72.6
    & 77.2
    & 81.6
    & 76.3 \\

    CLIP-Adapter
    & 34.2
    & 94.9
    & 74.0
    & 59.4
    & 71.4
    & 92.9
    & 87.1
    & 92.3
    & 74.2
    & 80.2
    & 85.7
    & 76.9 \\

    Tip-Adapter-F
    & 44.6
    & 95.7
    & 82.3
    & 70.8
    & 85.9
    & 96.2
    & 86.8
    & 92.6
    & 76.0
    & 83.9
    & 81.2
    & 81.5 \\

    TaskRes
    & 44.9
    & 95.8
    & 83.5
    & 71.5
    & 82.7
    & \textbf{97.5}
    & 86.9
    & 92.4
    & \textbf{76.1}
    & 84.0
    & 83.3
    & 81.7 \\

    LP++
    & 42.1
    & 95.8
    & 80.8
    & \textbf{71.9}
    & 85.5
    & 96.3
    & 87.2
    & 92.6
    & 76.0
    & 83.9
    & 80.9
    & 81.2 \\

    \rowcolor{gray!30}
    RVP
    & \textbf{46.1}\scriptsize{$\pm$0.2}
    & \textbf{96.5}\scriptsize{$\pm$0.2}
    & \textbf{84.8}\scriptsize{$\pm$0.3}
    & 68.7\scriptsize{$\pm$0.1}
    & \textbf{92.7}\scriptsize{$\pm$0.2}
    & 96.7\scriptsize{$\pm$0.2}
    & 85.5\scriptsize{$\pm$0.1}
    & \textbf{94.0}\scriptsize{$\pm$0.1}
    & 73.8\scriptsize{$\pm$0.1}
    & \textbf{85.1}\scriptsize{$\pm$0.7}
    & \textbf{85.9}\scriptsize{$\pm$0.5}
    & \textbf{82.7} \\

    \bottomrule
    \end{tabular}}
    \end{sc}
    \label{tab:more}
    \vspace{-0.1cm}
\end{table*}

\begin{table}[t]
    \centering
    \caption{\textbf{Accuracy and parameter efficiency on StanfordCars} under the 16-shot setting with ViT-B/16. Trainable parameters count only method-specific adaptation parameters.}
    \label{tab:efficiency_cars}
    \small
    \resizebox{\columnwidth}{!}{
    \begin{tabular}{l|cc|l}
        \toprule
        Method
        &  Accuracy (\%)
        & Trainable Params.
        & Inference Path \\
        \midrule

        CoOp
        & 82.9
        & 0.008M
        & Fixed classifier from learned text prompts \\

        CoCoOp
        & 72.3
        & 0.042M
        & Image-conditioned text features \\

        CLIP-Adapter
        & 74.0
        & 0.131M
        & Nonlinear feature adapter \\

        Tip-Adapter-F
        & 82.3
        & 1.606M
        & Cache-based adapted logits \\

        TaskRes
        & 83.5
        & 0.100M
        & Fixed linear head \\

        LP++
        & 80.8
        & 0.101M
        & Fixed linear head \\

        LDC
        & 84.2
        & 5.336M
        & Multi-level adapters and adaptive fusion \\

        \rowcolor{gray!30}
        RVP
        & \textbf{84.8}
        & 0.082M
        & Prompted CLIP with exactly folded linear head \\

        \bottomrule
    \end{tabular}}
\end{table}

As shown in Table~\ref{tab:efficiency_cars}, RVP achieves a favorable accuracy--parameter trade-off. In particular, it slightly improves over LDC on StanfordCars while using only about $1.5\%$ of its trainable parameters. This efficiency follows from the structured design of RVP, whose learned mapping can be exactly reparameterized into a single linear head at inference. The results therefore show that RVP can retain strong recognition performance without relying on a large adaptation module.

\subsection{Computation Cost}
The VP method \citep{bahng2022exploring} adopts a visual noise pattern with a frame width of 30 pixels. For an input image of size $224\times224$, this corresponds to
$224\times224\times3 - (224-60)\times(224-60)\times3 = 69840$
trainable prompt parameters. In contrast, both AR \citep{tsai2020transfer,chen2023understanding} and AttrVR \citep{cai2025attribute} use a narrower frame width of 16 pixels, which results in
$224\times224\times3 - (224-32)\times(224-32)\times3 = 39936$
trainable parameters. DVP employs decoupled visual prompting, typically using three visual prompts. Its total number of prompt parameters is therefore $39936 \times 3 = 119808.$ Just like AR \citep{tsai2020transfer,chen2023understanding} and AttrVR \citep{cai2025attribute}, RVP uses 39936 trainable prompt parameters.

Regarding logit aggregation, VP and AR do not introduce additional trainable parameters, as they do not rely on multiple textual descriptions per class. AttrVR adopts fixed aggregation functions (e.g., mean, average, max, or kNN), which also do not introduce learnable parameters. In contrast, DVP employs a Probability Reweighting Matrix
$\boldsymbol{\omega}_{\mathrm{PRM}}\in\mathbb{R}^{CM\times C}$
for aggregating description-level logits. Although the matrix is
defined over $CM\times C$ entries, only $C\times M$ mapping parameters
are effectively learnable under its structured parameterization.

For RVP, the intra-class aggregation matrix $P \in \mathbb{R}^{C\times M}$ introduces $C\times M$ parameters, while the inter-class matrix $E \in \mathbb{R}^{C\times C}$ contributes an additional $C\times C$ parameters.

\subsection{Additional Analysis}
\paragraph{Inter-class structure and the benefit of $\mathbf{E}$.}
To examine when inter-class correction is most beneficial, we characterize the
class structure induced by the text embeddings. For each dataset, we uniformly
aggregate the 20 normalized attribute embeddings of each class, compute the
eigenspectrum of the resulting class-prototype Gram matrix, and define
$r_{90}$ as the minimum number of eigenvalues required to explain $90\%$ of
the spectral mass. Since the number of classes $C$ varies substantially across
datasets, from 10 to 397, we use the normalized quantity $r_{90}/C$ as the
primary statistic. We measure the benefit of inter-class modeling as
$\Delta_E=\mathrm{Acc}(\mathrm{RVP})-\mathrm{Acc}(\mathrm{w/o}\ E)$.

\begin{table}[t]
    \centering
    \caption{\textbf{Relationship between text-space concentration and the
    benefit of inter-class correction.}
    $r_{90}$ denotes the minimum number of eigenvalues explaining $90\%$ of
    the spectral mass of the class-prototype Gram matrix, and
    $\Delta_E$ measures the accuracy gain from enabling the inter-class
    residual matrix $\mathbf{E}$.}
    \label{tab:spectrum_e}
    \small
    \resizebox{0.5\columnwidth}{!}{
    \begin{tabular}{l|rrr|rrr}
        \toprule
        Dataset
        & $C$
        & $r_{90}$
        & $r_{90}/C$
        & RVP
        & w/o $\mathbf{E}$
        & $\Delta_E$ \\
        \midrule
        Aircraft    & 100 & 7  & 0.070 & 46.1 & 35.6 & 10.5 \\
        Caltech101  & 100 & 28 & 0.280 & 96.5 & 96.1 & 0.4 \\
        Cars        & 196 & 24 & 0.122 & 84.8 & 68.0 & 16.8 \\
        DTD         & 47  & 3  & 0.064 & 68.7 & 63.3 & 5.4 \\
        EuroSAT     & 10  & 1  & 0.100 & 92.7 & 93.8 & -1.1 \\
        Flowers102  & 102 & 27 & 0.265 & 96.7 & 91.7 & 5.0 \\
        Food101     & 101 & 29 & 0.287 & 85.5 & 85.6 & -0.1 \\
        Oxford Pets & 37  & 11 & 0.297 & 94.0 & 93.1 & 0.9 \\
        SUN397      & 397 & 31 & 0.078 & 73.8 & 67.4 & 6.4 \\
        UCF101      & 101 & 22 & 0.218 & 85.1 & 78.9 & 6.2 \\
        RESISC45    & 45  & 7  & 0.156 & 85.9 & 83.5 & 2.4 \\
        \bottomrule
    \end{tabular}}
\end{table}

Across the 11 datasets, $r_{90}/C$ is negatively associated with
$\Delta_E$ (Spearman $\rho=-0.527$). Since $r_{90}/C$ can also depend on the
number of classes, we additionally compute a partial Spearman correlation
while controlling for $C$, which yields $\rho=-0.625$ with $p=0.040$.
The relationship is also stable under leave-one-dataset-out analysis: all
partial correlations remain negative, ranging from $-0.772$ to $-0.510$.
These observations are consistent with the hypothesis that when class
prototypes occupy a more concentrated shared text subspace, there is more
room for inter-class correction to improve class discrimination.

\paragraph{Large gains and the learned structure of $\mathbf{E}$.}
The large improvement on StanfordCars is not explained by $\mathbf{E}$ acting
as a negligible residual. On this dataset, $\mathbf{E}$ contains 38,416
parameters, while the 16-shot training set contains 3,136 images. On held-out
data, the mean relative correction induced by $\mathbf{E}$ is $0.456$, and
enabling $\mathbf{E}$ changes $36.2\%$ of predictions. At the same time, the
learned matrix exhibits clear structure: its stable rank is only $15.6$,
compared with $50.1\pm1.0$ under an entry-permutation null. Moreover,
$\|\operatorname{diag}(\mathbf{E})\|_F/\|\mathbf{E}\|_F=0.075$, close to the
null value of $0.072$, indicating that the learned correction is predominantly
off-diagonal and therefore genuinely inter-class rather than a simple
per-class rescaling.

\begin{table}[h]
\centering
\caption{Number of trainable parameters for different methods on the Aircraft dataset ($C=100$, $M=20$).}
\label{tab:param_comparison}
\footnotesize
\begin{tabular}{l|c|c|c}
\toprule
\textbf{Method} & \textbf{Prompt Parameters}  & \textbf{Total} & Accuracy \\
\midrule
VP & 69840 & 69840 & 32.1 \\
AR & 39936 & 39936 & 31.7 \\
AttrVR & 39936 & 39936 & 36.6 \\
DVP & 119808 & 121808 & 38.7 \\
RVP & 39936 & 51936 & 46.1 \\
\bottomrule
\end{tabular}%
\end{table}

As shown in \cref{tab:param_comparison}, RVP achieves the best accuracy on Aircraft while remaining parameter-efficient. Although it uses the same number of prompt parameters as AR and AttrVR, its additional structured aggregation introduces only a modest overhead, resulting in a total of 51936 parameters. In contrast, DVP uses substantially more parameters due to multiple visual prompts, yet still underperforms RVP. This shows that the gain of RVP comes from a more effective parameterization of class relationships rather than from simply increasing model size.

\begin{table*}[h]
\centering
\caption{Accuracy comparison of RVP, AttrVR, and DVP under the same text prompt setting, where DVP is restricted to a single group of trainable visual prompts, using ViT-B/16 CLIP as the pretrained model (mean \%; ours are \textcolor{black}{\colorbox{gray!30}{highlighted}} and the best results are shown in \textbf{bold}).}
\begin{sc}
\resizebox{\textwidth}{!}{
\begin{tabular}{c|ccccccccccc|c}
\toprule
                 & Aircraft      & Caltech       & Cars          & DTD           & ESAT          & Flowers       & Food          & Pets          & SUN           & UCF           & Resisc        & Avg. \\
\midrule
AttrVR (DesAttr) & 35.9          & 95.6          & 68.2          & 64.4          & 93.8          & 92.4          & \textbf{85.7}          & 93.0          & 67.7          & 78.6          & 81.8          & 77.9 \\
DVP (num=1)  & 36.4          & 95.8          & 69.1          & 65.3          & \textbf{94.1} & 93.6          & \textbf{85.7} & 93.1          & {70.0} & {80.2} & {82.8} & {78.7} \\
\rowcolor{gray!30}
RVP & \textbf{46.1} & \textbf{96.5} & \textbf{84.8} & \textbf{68.7} & 92.7 & \textbf{96.7} & 85.5 & \textbf{94.0} & \textbf{73.8} & \textbf{85.1} & \textbf{85.9} & \textbf{82.7} \\
\bottomrule
\end{tabular}}
\end{sc}
\label{tab:1VP}
\end{table*}

\cref{tab:1VP} compares RVP, AttrVR, and DVP under a controlled setting where all methods use the same text prompts and DVP is restricted to a single group of trainable visual prompts. Specifically, since AttrVR originally uses two groups of text prompts, namely \textit{Descriptive Attributes} and \textit{Distinctive Attributes}, we retain only \textit{Descriptive Attributes} here to ensure a fair comparison across methods. Under this setting, RVP still achieves clear and consistent improvements over both AttrVR and DVP. In particular, RVP attains the best average accuracy of 82.7\%, outperforming DVP by 4.0 points and AttrVR by 4.8 points. The improvement is especially pronounced on fine-grained datasets such as Aircraft and Cars, where RVP surpasses DVP by 9.7 and 15.7 points, respectively. These results indicate that the advantage of RVP does not rely on using more diverse text prompts or multiple prompt groups. Instead, the gain comes from its more effective modeling of intra-class aggregation and inter-class relationships, which allows it to better suppress shared semantics and enhance subtle class-specific differences.

\begin{table*}[h]
\centering
\caption{Accuracy comparison of our RVP and DVPlite trained on 16-shot downstream classification task, using ViT-B/16-based CLIP as the pretrained model (Mean \%, ours is \textcolor{black}{\colorbox{gray!30}{highlighted}} and the highest is in \textbf{bold}).}
\footnotesize
\begin{sc}
\resizebox{\textwidth}{!}{
\begin{tabular}{c|ccccccccccc|c}
\toprule
                & Aircraft & Caltech & Cars & DTD  & ESAT & Flowers & Food & Pets & SUN  & UCF & Resisc & Avg. \\
\midrule
AttrVR & 36.6     & 95.7    & 68.3 & 65.6 & \textbf{93.8} & 92.9    & \textbf{85.9} & 93.3 & 69.6 & 79.0  & 82.6   & 78.5    \\
DVPlite         & {39.3}     & {95.9}    & {71.4} & {66.5} & \textbf{93.8} & {95.2}    & 85.8 & {93.4} & {71.6} & {81.0}  & {83.6}   & {79.8} \\
\rowcolor{gray!30}
RVP & \textbf{46.1} & \textbf{96.5} & \textbf{84.8} & \textbf{68.7} & 92.7 & \textbf{96.7} & 85.5 & \textbf{94.0} & \textbf{73.8} & \textbf{85.1} & \textbf{85.9} & \textbf{82.7} \\
\bottomrule
\end{tabular}}
\end{sc}
\label{tab:dvplite}
\end{table*}

\cref{tab:dvplite} compares our method with DVPlite, an efficient variant of DVP proposed by Cai et al.~\citep{cai2025understanding}. DVPlite decomposes the visual prompt into four directional components, namely up, down, left, and right, and assigns them to different cause groups generated by an LLM. Unlike standard DVP, this design avoids multiple forward passes through the image encoder. However, it still relies on substantially more text prompts than our method, since each direction is associated with its own set of $C\times M$ prompts. 
Despite this more complex prompt design, our method achieves the best overall performance. As shown in \cref{tab:dvplite}, RVP attains an average accuracy of 82.7\%, outperforming DVPlite by 2.9 points. These results show that RVP is not only more accurate, but also simpler to use, as it avoids directional prompt decomposition and additional LLM-based cause grouping while still delivering stronger performance.

\subsection{Additional Visualization}
\begin{figure*}[t]
    \centering
    \includegraphics[width=\linewidth]{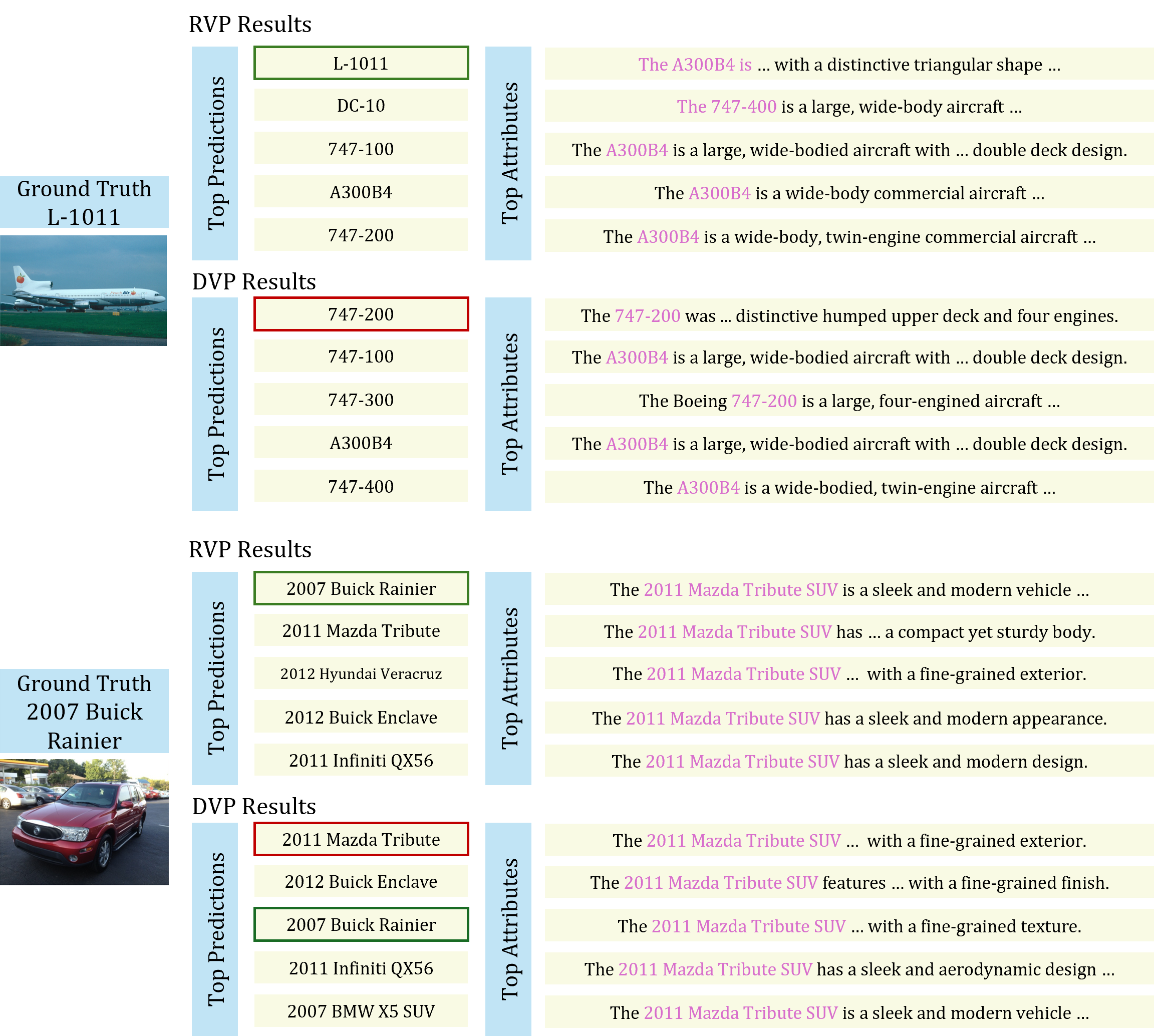}
    \caption{Additional visualization of the top predicted classes and highest-matching attributes for a test image. For both RVP and DVP, the most similar attributes include prompts from other classes, reflecting strong semantic overlap in fine-grained recognition. However, RVP explicitly models inter-class relationships, allowing it to better resolve these cross-class ambiguities and produce the correct prediction.}
    \label{fig:vis_air_car}
\end{figure*}

\cref{fig:vis_air_car} further shows that attribute matching alone is not sufficient for fine-grained recognition. In both the aircraft and car examples, the top-matched attributes are dominated by semantically similar but incorrect classes, indicating that these attributes largely overlap and cannot reliably determine the final label on their own. Despite receiving similarly misleading attribute evidence, RVP still predicts the correct class, whereas DVP fails. This is because RVP does not rely only on prompt-level similarity; instead, it explicitly captures inter-class relationships, allowing it to suppress confusing evidence from correlated classes and produce better predictions.

\subsection{Additional Error Analysis}
\label{app:limitation}

\begin{figure*}[htbp]
    \centering
    \includegraphics[width=\linewidth]{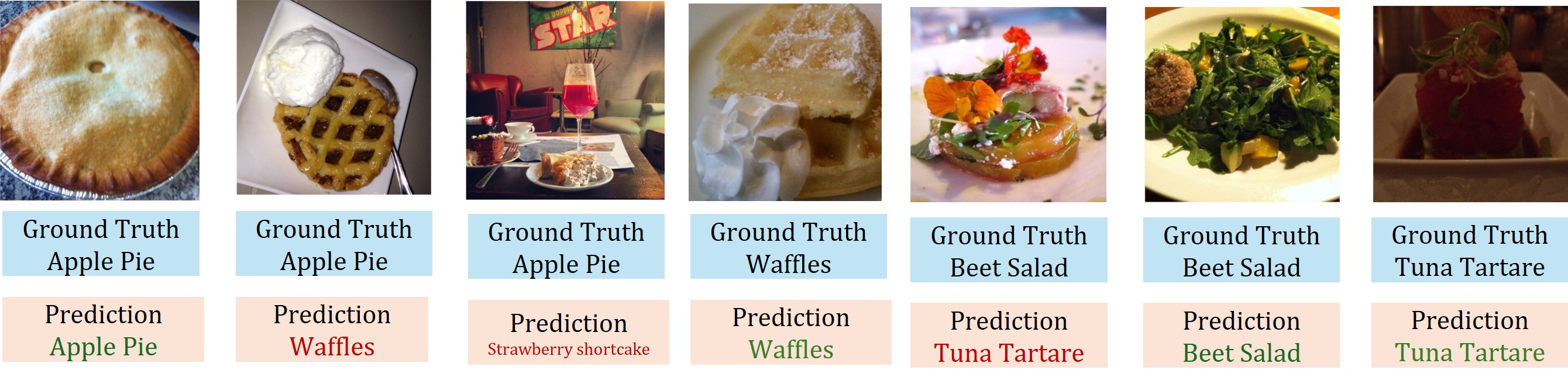}
    \caption{Qualitative examples on Food101. Food categories often exhibit large intra-class variation and strong cross-class visual overlap due to differences in plating, viewpoint, garnish, and accompanying side dishes. As shown here, classes such as Apple Pie and Waffles can be confused when the main dish is partially visible or co-occurs with similar desserts, while Beet Salad and Tuna Tartare may share similar fine-grained presentation and ingredients. Correct predictions are shown in green and incorrect predictions in red.}
    \label{fig:vis_food}
\end{figure*}

Food101 is particularly challenging because its class semantics are often compositional rather than visually stable. Unlike fine-grained object categories, a food image may contain the main dish together with side dishes, garnish, sauces, or additional ingredients, so the same class can vary substantially across samples. As illustrated in \cref{fig:vis_food}, Apple Pie can co-occur with cream or be presented in ways that resemble other desserts, while Beet Salad and Tuna Tartare may share similar plating style, color, and ingredient structure. In such cases, the ambiguity is driven not only by inter-class similarity, but also by high intra-class variation and unstable visual cues, which reduces the benefit of structured inter-class modeling.

\begin{figure*}[htbp]
    \centering
    \includegraphics[width=\linewidth]{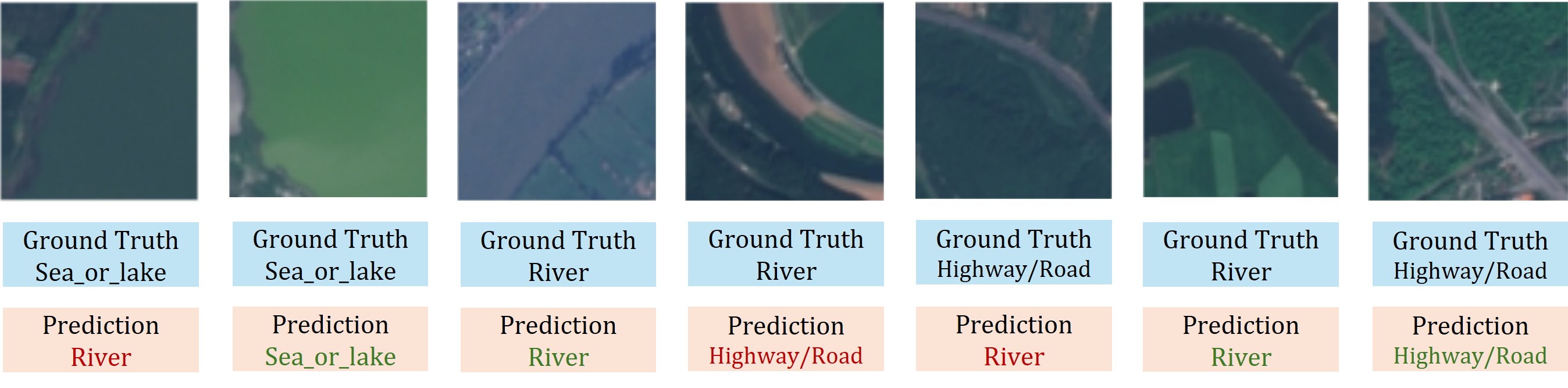}
    \caption{Qualitative examples on EuroSAT. Several classes, especially Sea\_or\_lake, River, and Highway/Road, exhibit strong visual similarity in satellite crops due to elongated shapes, curved boundaries, and limited scene context. As a result, some samples remain ambiguous even under the proposed method, suggesting that EuroSAT is less dominated by inter-class semantic ambiguity than fine-grained recognition benchmarks. Correct predictions are shown in green and incorrect predictions in red.}
    \label{fig:vis_eurosat}
\end{figure*}
A possible reason why RVP is less advantageous on EuroSAT is that this benchmark does not primarily require the kind of inter-class semantic disambiguation that RVP is designed to address. Unlike fine-grained tasks such as Aircraft and Cars, where many classes share highly similar semantic attributes, EuroSAT contains only 10 classes, and many errors arise from coarse visual ambiguity in satellite crops rather than from strong overlap in text semantics. As illustrated in \cref{fig:vis_eurosat}, categories such as Sea\_or\_lake, River, and Highway/Road can appear visually similar due to limited resolution (64 $\times$ 64 pixels), elongated structures, and missing global scene context. In such cases, the main difficulty lies in ambiguous visual evidence and spatial layout, rather than in class relationships within the text embedding space. As a result, explicit inter-class modeling provides less benefit on EuroSAT than on more fine-grained benchmarks.

\section{Propositions and Proof}
\label{sec:proof}
\subsection{Theoretical Justification of RVP}
\label{app:theory_rvp}

In this section, we provide a formal justification for why RVP is well-suited to few-shot visual reprogramming, especially for fine-grained recognition. We do not claim that RVP is universally optimal for all data distributions. Rather, the results below show that under a natural low-rank shared-semantic assumption, RVP has three desirable properties: (i) it preserves the pretrained CLIP semantic subspace, (ii) its inter-class residual can explicitly suppress shared semantic components, and (iii) it refines decision margins in a stable manner.

\paragraph{Setup.}
Recall that the inference rule of RVP is
\begin{equation}
    \mathbf{z} = \hat{\mathbf{v}}^\top \hat{W},
    \qquad
    \hat{W} = \frac{1}{\tau} T^\top W_1 (I + E),
\end{equation}
where \(T \in \mathbb{R}^{CM \times D}\) is the stacked text embedding matrix, \(W_1 \in \mathbb{R}^{CM \times C}\) is the block-diagonal intra-class aggregation matrix, and \(E \in \mathbb{R}^{C \times C}\) is the inter-class residual matrix.

\begin{proposition}[Exact linear reparameterization]
\label{prop:exact_reparam}
For any learned intra-class weights \(W_1\) and inter-class matrix \(E\), there exists a single matrix \(\hat{W} \in \mathbb{R}^{D \times C}\) such that the training-time classifier and the inference-time classifier are identical:
\begin{equation}
    \mathbf{z}
    =
    \frac{1}{\tau}\hat{\mathbf{v}}^\top T^\top W_1 (I+E)
    =
    \hat{\mathbf{v}}^\top \hat{W}.
\end{equation}
\end{proposition}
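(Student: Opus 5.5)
The plan is to verify directly that the training-time logits, computed through the pipeline of the method section, coincide entrywise with $\frac{1}{\tau}\hat{\mathbf{v}}^\top T^\top W_1 (I+E)$. Once that identity holds, the result follows from associativity of matrix multiplication with $\hat{W} := \frac{1}{\tau} T^\top W_1 (I+E)$. We note at the outset that $\hat{W}$ depends only on the frozen text embeddings and the learned parameters $P$ and $E$, not on the input $x^{\mathrm{T}}$. This input-independence is what makes precomputation possible.

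\textbf{Step 1: recover the base logits.} Index the rows of $T$ by pairs $(c,m)$ in the stacking order used in \eqref{eq:matrix_shape_final}, so that $T\hat{\mathbf{v}}$ has $(c,m)$-entry $\hat{\mathbf{t}}_{c,m}^\top\hat{\mathbf{v}}$. By the block-diagonal structure \eqref{eq:w1_structure}, column $c$ of $W_1$ equals $\tilde{P}_{c,m}$ on rows $(c,m)$ for $m=1,\dots,M$ and is zero elsewhere. Hence the $c$-th entry of $\frac{1}{\tau}\hat{\mathbf{v}}^\top T^\top W_1$ is $\sum_{m}\tilde{P}_{c,m}\frac{1}{\tau}\hat{\mathbf{v}}^\top\hat{\mathbf{t}}_{c,m}$, which is exactly $f_c(x^{\mathrm{T}})$. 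Therefore $\frac{1}{\tau}\hat{\mathbf{v}}^\top T^\top W_1 = \mathbf{f}(x^{\mathrm{T}})$ as a row vector.

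\textbf{Step 2: apply the residual step.} Right-multiplying the identity from Step 1 by $(I+E)$ gives the training-time logits $\mathbf{z}$ of \eqref{eq:message_passing}. This proves the first equality in the proposition.

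\textbf{Step 3: regroup.} By associativity,
$\bigl(\tfrac{1}{\tau}\hat{\mathbf{v}}^\top T^\top W_1\bigr)(I+E)=\hat{\mathbf{v}}^\top\bigl(\tfrac{1}{\tau}T^\top W_1(I+E)\bigr)$. A dimension check confirms the product is well defined: $(D\times CM)(CM\times C)(C\times C)$ gives $\hat{W}\in\mathbb{R}^{D\times C}$.

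The only step needing care, and the expected obstacle, is the index bookkeeping in Step 1. The ordering of rows of $T$ must match the block ordering of $W_1$, and the softmax normalization of $\tilde{P}$ must be fixed before reparameterization. This holds at inference because $P$ is frozen, so $\tilde{P}$ is a constant matrix. Everything else is pure linear algebra; in particular, the normalization of $\hat{\mathbf{v}}$ plays no role, since it is applied before the linear map in both pipelines.
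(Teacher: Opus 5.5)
Your proposal is correct and follows the paper's argument: define $\hat{W} = \frac{1}{\tau}T^\top W_1(I+E)$ and invoke associativity of matrix multiplication. Your Step 1 checks, via the block-diagonal structure of $W_1$, that $\frac{1}{\tau}\hat{\mathbf{v}}^\top T^\top W_1$ reproduces the base logits $\mathbf{f}(x^{\mathrm{T}})$; this verifies the first equality, which the paper's proof simply takes for granted.
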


\begin{proof}
This follows directly from the associativity of matrix multiplication by defining
\[
    \hat{W} = \frac{1}{\tau}T^\top W_1 (I+E).
\]
Substituting this definition into the classifier gives the desired result.
\end{proof}

\begin{proposition}[Text-span preservation]
\label{prop:text_span}
Every column of the reparameterized classifier \(\hat{W}\) lies in the column space of \(T^\top\). Equivalently,
\begin{equation}
    \mathrm{col}(\hat{W}) \subseteq \mathrm{col}(T^\top).
\end{equation}
\end{proposition}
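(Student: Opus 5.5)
The plan is to use the factorization of \(\hat{W}\) supplied by Proposition~\ref{prop:exact_reparam}, \(\hat{W} = \frac{1}{\tau}T^\top W_1 (I+E)\), and apply the elementary fact that for any matrices \(A\in\mathbb{R}^{D\times CM}\) and \(B\in\mathbb{R}^{CM\times C}\), the column space of \(AB\) lies inside the column space of \(A\). Take \(A = T^\top\) and \(B = \frac{1}{\tau}W_1(I+E)\in\mathbb{R}^{CM\times C}\). The structure of \(W_1\) (block-diagonal, softmax-normalized) and of \(E\) plays no role; only the fact that \(T^\top\) is the leftmost factor matters.

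Concretely, the \(j\)-th column of \(\hat{W}\) is
\[
\hat{W}_{:,j} = T^\top \mathbf{b}_j, \qquad \mathbf{b}_j = \tfrac{1}{\tau}W_1(I+E)\mathbf{e}_j,
\]
where \(\mathbf{e}_j\) is the \(j\)-th standard basis vector of \(\mathbb{R}^{C}\). Writing \(\mathbf{b}_j = (b_{j,(c,m)})_{c,m}\), this column equals \(\sum_{c,m} b_{j,(c,m)}\hat{\mathbf{t}}_{c,m}\). It is therefore a linear combination of the normalized text embeddings, which are exactly the columns of \(T^\top\). Since this holds for every \(j=1,\dots,C\), every column of \(\hat{W}\) lies in \(\mathrm{col}(T^\top)\). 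Because \(\mathrm{col}(\hat{W})\) is the span of its columns and \(\mathrm{col}(T^\top)\) is a subspace, the inclusion \(\mathrm{col}(\hat{W})\subseteq\mathrm{col}(T^\top)\) follows.

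There is no real obstacle. The only point needing care is to confirm that \(\hat{W}\) is exactly the product from Proposition~\ref{prop:exact_reparam}, so that no additive term (such as a bias) falls outside the text span. Since \(\hat{W}\) is defined as that product, the inclusion holds for arbitrary \(E\) and arbitrary weights \(W_1\), not only for learned ones.
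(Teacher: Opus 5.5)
Your proposal is correct and matches the paper's proof: both take the factorization $\hat{W} = \frac{1}{\tau}T^\top W_1(I+E)$ from Proposition~\ref{prop:exact_reparam} and note that each column of $\hat{W}$ is $T^\top$ applied to some vector, hence a linear combination of the columns of $T^\top$. Your explicit column formula $\hat{W}_{:,j} = T^\top \mathbf{b}_j$ and your remark that the structure of $W_1$ and $E$ plays no role simply make the paper's one-line argument more explicit.
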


\begin{proof}
By \cref{prop:exact_reparam},
\[
    \hat{W} = \frac{1}{\tau} T^\top W_1 (I+E).
\]
Hence, each column of \(\hat{W}\) is a linear combination of the columns of \(T^\top\).
Therefore,
\[
    \mathrm{col}(\hat{W}) \subseteq \mathrm{col}(T^\top).
\]
\end{proof}

\noindent
\cref{prop:text_span} shows that RVP constructs its classifier entirely from the span of the pretrained CLIP text embeddings, rather than introducing classifier directions outside this text-induced space. Thus, the inter-class correction recombines existing text-derived directions instead of learning an unconstrained classifier directly in \(\mathbb{R}^D\). When the text embeddings occupy a lower-dimensional subspace, this additionally restricts the effective classifier space and provides a structured inductive bias for few-shot adaptation.

\begin{assumption}[Shared-semantic decomposition]
\label{ass:shared_semantic}
Let the base class-logit vector before inter-class correction be \(\mathbf{f}(x) \in \mathbb{R}^C\). Assume that there exists an \(r\)-dimensional subspace \(\mathcal{S} \subset \mathbb{R}^C\), with orthonormal basis \(U \in \mathbb{R}^{C \times r}\), such that
\begin{equation}
    \mathbf{f}(x) = \mathbf{s}(x) + \mathbf{d}(x),
\end{equation}
where \(\mathbf{s}(x) \in \mathcal{S}\) is a shared semantic component and \(\mathbf{d}(x) \in \mathcal{S}^{\perp}\) is a class-discriminative component.
\end{assumption}

\noindent
\cref{ass:shared_semantic} formalizes the empirical observation that fine-grained classes often share dominant semantic directions, while useful class-specific information resides in weaker contrastive components.  

\begin{theorem}[Suppression of shared semantic components]
\label{thm:projection}
Under \cref{ass:shared_semantic}, there exists a residual matrix \(E\) such that the inter-class correction removes the shared semantic component exactly. In particular, if we choose
\begin{equation}
    E = -UU^\top,
\end{equation}
then
\begin{equation}
    \mathbf{z}
    =
    \mathbf{f}(x)(I+E)
    =
    \mathbf{d}(x).
\end{equation}
\end{theorem}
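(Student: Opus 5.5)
The plan is to exhibit the residual matrix explicitly and verify the identity by direct computation, using only the orthonormality of $U$ and the decomposition in \cref{ass:shared_semantic}. Set $E=-UU^\top$. Then $I+E=I-UU^\top$, so the corrected logits are
\[
\mathbf{z}=\mathbf{f}(x)(I+E)=\mathbf{f}(x)(I-UU^\top).
\]
Because $\mathbf{f}(x)$ is a row vector, I will view $I-UU^\top$ as acting on the right. The first step is to check that $I-UU^\top$ is the orthogonal projector onto $\mathcal{S}^\perp$. It is symmetric, and since $U^\top U=I_r$ we have $(UU^\top)^2=U(U^\top U)U^\top=UU^\top$. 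Moreover, $UU^\top$ fixes every vector of $\mathcal{S}=\mathrm{col}(U)$ and annihilates every vector of $\mathcal{S}^\perp$, so the complementary matrix $I-UU^\top$ does the reverse.

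The second step applies this projector to each piece of $\mathbf{f}(x)=\mathbf{s}(x)+\mathbf{d}(x)$, using linearity.

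\textbf{Shared part.} Since $\mathbf{s}(x)\in\mathcal{S}$, we can write $\mathbf{s}(x)^\top=U\boldsymbol{\alpha}$ for some $\boldsymbol{\alpha}\in\mathbb{R}^r$. Then
\[
\mathbf{s}(x)UU^\top=\boldsymbol{\alpha}^\top U^\top UU^\top=\boldsymbol{\alpha}^\top U^\top=\mathbf{s}(x),
\]
so $\mathbf{s}(x)(I-UU^\top)=\mathbf{0}$.

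\textbf{Discriminative part.} Since $\mathbf{d}(x)\in\mathcal{S}^\perp$, we have $\mathbf{d}(x)U=\mathbf{0}$. Hence $\mathbf{d}(x)UU^\top=\mathbf{0}$ and $\mathbf{d}(x)(I-UU^\top)=\mathbf{d}(x)$.

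Adding the two pieces gives $\mathbf{z}=\mathbf{d}(x)$, which establishes both the existence claim and the specific formula.

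None of these steps is substantively hard. The only point needing care is the row-vector convention: membership $\mathbf{s}(x)\in\mathcal{S}$ must be read through the transpose, and right-multiplication by the symmetric matrix $UU^\top$ must coincide with the usual column-vector projection. I will note explicitly that this holds because $UU^\top$ is symmetric.

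I will also record two remarks. First, $E$ is independent of $x$, so the same correction works for every input. Second, $E$ is an admissible choice of the inter-class matrix in \eqref{eq:message_passing}, so \cref{prop:exact_reparam} applies and the shared-component suppression still folds into the single linear classifier $\hat W$.
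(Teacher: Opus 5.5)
Your proposal is correct and follows essentially the same route as the paper: substitute $E=-UU^\top$, identify $I-UU^\top$ as the orthogonal projector onto $\mathcal{S}^\perp$, and apply it separately to $\mathbf{s}(x)$ and $\mathbf{d}(x)$ before summing. Your explicit checks of idempotence and of the row-vector convention (writing $\mathbf{s}(x)^\top=U\boldsymbol{\alpha}$ and using $\mathbf{d}(x)U=\mathbf{0}$) fill in details that the paper leaves implicit.
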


\begin{proof}
Substituting \(E=-UU^\top\) yields
\[
    I+E = I-UU^\top,
\]
which is the orthogonal projector onto \(\mathcal{S}^{\perp}\). Since \(\mathbf{s}(x)\in \mathcal{S}\), we have
\[
    \mathbf{s}(x)(I-UU^\top)=\mathbf{0}.
\]
Since \(\mathbf{d}(x)\in \mathcal{S}^{\perp}\), we have
\[
    \mathbf{d}(x)(I-UU^\top)=\mathbf{d}(x).
\]
Therefore,
\[
    \mathbf{z}
    =
    \big(\mathbf{s}(x)+\mathbf{d}(x)\big)(I-UU^\top)
    =
    \mathbf{d}(x).
\]
\end{proof}

\noindent
\cref{thm:projection} provides a formal explanation for why inter-class modeling is useful. If different classes share a low-rank semantic component, then a suitable residual class-relation matrix can cancel that shared component and retain only the discriminative part.

\begin{corollary}[Margin recovery under the projected classifier]
\label{cor:margin_recovery}
Under the conditions of \cref{thm:projection}, suppose the true label is \(y\) and the discriminative component satisfies
\begin{equation}
    d_y(x) - \max_{j \neq y} d_j(x) > 0.
\end{equation}
Then the RVP classifier with \(E=-UU^\top\) predicts the correct class:
\begin{equation}
    y = \arg\max_{c} z_c.
\end{equation}
\end{corollary}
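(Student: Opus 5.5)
The plan is to reduce the claim directly to \cref{thm:projection} and then read off the argmax from the assumed margin condition. Under the stated conditions, \cref{ass:shared_semantic} gives the decomposition $\mathbf{f}(x)=\mathbf{s}(x)+\mathbf{d}(x)$, with $\mathbf{s}(x)\in\mathcal{S}$ and $\mathbf{d}(x)\in\mathcal{S}^{\perp}$. With the choice $E=-UU^\top$, \cref{thm:projection} states that the final logit vector is exactly $\mathbf{z}=\mathbf{f}(x)(I+E)=\mathbf{d}(x)$. So the first step is to record the coordinatewise identity $z_c=d_c(x)$ for every $c\in\{1,\dots,C\}$.

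The second step transfers the margin hypothesis to $\mathbf{z}$. Substituting the identity gives
\[
z_y-\max_{j\neq y} z_j \;=\; d_y(x)-\max_{j\neq y} d_j(x) \;>\; 0 .
\]
Hence $z_y>z_j$ for every $j\neq y$.

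The third step concludes. Since the inequality is strict for all $j\neq y$, the maximizer of $c\mapsto z_c$ over the finite set $\{1,\dots,C\}$ is unique and equals $y$. This gives $y=\arg\max_c z_c$, which is the paper's prediction rule applied to $\mathbf{z}=\hat{\mathbf{v}}^\top\hat{W}$. By \cref{prop:exact_reparam}, this rule coincides with the training-time classifier, so the conclusion holds for the deployed inference-time model as well.

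I expect no genuine obstacle, since the argument is a direct substitution. The only point that needs care is the well-posedness of $\arg\max$. The strict margin rules out ties, so the argmax is a singleton and the equality $y=\arg\max_c z_c$ is meaningful rather than merely a set membership.
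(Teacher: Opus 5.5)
Your proposal is correct and matches the paper's proof. Both invoke \cref{thm:projection} to get $\mathbf{z}=\mathbf{d}(x)$, carry the strict margin over coordinatewise, and conclude that $y$ is the (unique) maximizer. Your remarks on tie-breaking and on \cref{prop:exact_reparam} are harmless additions that the argument does not need.
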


\begin{proof}
By \cref{thm:projection}, \(\mathbf{z}=\mathbf{d}(x)\). Hence
\[
    z_y - \max_{j\neq y} z_j
    =
    d_y(x) - \max_{j\neq y} d_j(x)
    >0,
\]
which implies \(y=\arg\max_c z_c\).
\end{proof}

\begin{proposition}[Bounded margin degradation under residual correction]
\label{prop:stable_margin}
Let
\begin{equation}
    \mathbf{z} = \mathbf{f} + \Delta,
    \qquad
    \Delta = \mathbf{f}E.
\end{equation}
For any class \(y\), define the multiclass margin
\begin{equation}
    m_y(\mathbf{a}) = a_y - \max_{j\neq y} a_j.
\end{equation}
Then
\begin{equation}
    m_y(\mathbf{z})
    \geq
    m_y(\mathbf{f}) - 2\|\Delta\|_{\infty}.
\end{equation}
\end{proposition}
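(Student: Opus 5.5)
The argument is a direct entrywise perturbation bound. Since \(\mathbf{z} = \mathbf{f} + \Delta\), every coordinate satisfies \(|z_c - f_c| = |\Delta_c| \le \|\Delta\|_\infty\), where \(\|\cdot\|_\infty\) is the maximum absolute entry of the row vector. I would bound the two pieces of the margin \(m_y(\mathbf{z}) = z_y - \max_{j\neq y} z_j\) separately and then combine them.

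\emph{First term.} For the true class, the entrywise bound gives \(z_y = f_y + \Delta_y \ge f_y - \|\Delta\|_\infty\).

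\emph{Second term.} For the competitors, \(z_j \le f_j + \|\Delta\|_\infty\) for every \(j \neq y\). Taking the maximum over \(j \neq y\) on both sides gives
\[
\max_{j\neq y} z_j \le \max_{j\neq y} f_j + \|\Delta\|_\infty .
\]
This uses only the fact that the maximum is monotone and that adding a constant commutes with it.

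\emph{Combination.} Subtracting the second bound from the first yields
\[
m_y(\mathbf{z}) \ge f_y - \max_{j\neq y} f_j - 2\|\Delta\|_\infty = m_y(\mathbf{f}) - 2\|\Delta\|_\infty,
\]
which is the claim.

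The only point needing care is the second term, since it involves a maximum rather than a single coordinate. The monotonicity argument handles it cleanly and requires no case split on which index attains the maximum. The statement is also valid only for \(C \ge 2\), so that the maximum over \(j \neq y\) is over a nonempty set.

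Optionally, I would add the remark \(\|\Delta\|_\infty = \|\mathbf{f}E\|_\infty \le \|\mathbf{f}\|_\infty \|E\|_{1\to\infty}\), or equivalently the maximum absolute column sum of \(E\) acting on the row vector. This links the margin degradation to the size of the learned residual and reflects the zero initialization of \(E\). It is not needed for the statement itself.
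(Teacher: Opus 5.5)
Your proof is correct and is the same argument the paper gives: bound $z_y \ge f_y - \|\Delta\|_\infty$, bound $\max_{j\neq y} z_j \le \max_{j\neq y} f_j + \|\Delta\|_\infty$, and subtract. One small fix to your optional remark: the right constant is the maximum absolute column sum of $E$, as you say in words, but the label $\|E\|_{1\to\infty}$ usually denotes the largest absolute entry, which would not give a valid bound, so drop that notation.
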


\begin{proof}
Since \(\mathbf{z}=\mathbf{f}+\Delta\),
\[
    z_y \geq f_y - \|\Delta\|_\infty,
\]
and
\[
    \max_{j\neq y} z_j
    \leq
    \max_{j\neq y} f_j + \|\Delta\|_\infty.
\]
Subtracting the second inequality from the first gives
\[
    m_y(\mathbf{z})
    =
    z_y - \max_{j\neq y} z_j
    \geq
    \big(f_y - \|\Delta\|_\infty\big)
    -
    \big(\max_{j\neq y} f_j + \|\Delta\|_\infty\big),
\]
which simplifies to
\[
    m_y(\mathbf{z})
    \geq
    m_y(\mathbf{f}) - 2\|\Delta\|_\infty.
\]
\end{proof}

\noindent
\cref{prop:stable_margin} bounds the possible degradation of the classification margin under residual correction: the margin can decrease from the base margin by at most \(2\|\mathbf{f}E\|_{\infty}\). In particular, if
\[
    m_y(\mathbf{f}) > 2\|\mathbf{f}E\|_{\infty},
\]
then \(m_y(\mathbf{z})>0\), and the original prediction for class \(y\) is preserved.

\begin{proposition}[Structured restriction of the hypothesis class]
\label{prop:hypothesis_class}
Let
\begin{equation}
    \mathcal{H}_{\mathrm{dense}}
    =
    \left\{
    \hat{\mathbf{v}} \mapsto
    \frac{1}{\tau}\hat{\mathbf{v}}^\top T^\top \Omega
    \;:\;
    \Omega \in \mathbb{R}^{CM \times C}
    \right\}
\end{equation}
be the class of dense description-to-class mappings, and let
\begin{equation}
    \mathcal{H}_{\mathrm{RVP}}
    =
    \left\{
    \hat{\mathbf{v}} \mapsto
    \frac{1}{\tau}\hat{\mathbf{v}}^\top T^\top W_1(I+E)
    \;:\;
    W_1 \text{ is block diagonal},\;
    E \in \mathbb{R}^{C\times C}
    \right\}.
\end{equation}
Then
\begin{equation}
    \mathcal{H}_{\mathrm{RVP}}
    \subseteq
    \mathcal{H}_{\mathrm{dense}}.
\end{equation}
Moreover, RVP contains \(CM+C^2\) trainable parameters in its structured output mapping, whereas the dense mapping contains \(C^2M\) trainable parameters.
\end{proposition}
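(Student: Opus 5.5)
The inclusion follows by exhibiting, for each element of \(\mathcal{H}_{\mathrm{RVP}}\), an explicit dense reweighting matrix that realizes it. Fix any block-diagonal \(W_1\in\mathbb{R}^{CM\times C}\) of the form in \eqref{eq:w1_structure} and any \(E\in\mathbb{R}^{C\times C}\), and set
\[
\Omega := W_1(I+E)\in\mathbb{R}^{CM\times C}.
\]
The dimensions match: \(W_1\) is \(CM\times C\) and \(I+E\) is \(C\times C\), so \(\Omega\) is an admissible element of the unconstrained set \(\mathbb{R}^{CM\times C}\). By associativity of matrix multiplication, exactly as in \cref{prop:exact_reparam},
\[
\frac{1}{\tau}\hat{\mathbf{v}}^\top T^\top W_1(I+E)=\frac{1}{\tau}\hat{\mathbf{v}}^\top T^\top \Omega \qquad \text{for every } \hat{\mathbf{v}}.
\]
The two maps therefore coincide as functions of \(\hat{\mathbf{v}}\), so every element of \(\mathcal{H}_{\mathrm{RVP}}\) lies in \(\mathcal{H}_{\mathrm{dense}}\). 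Since \(W_1\) and \(E\) were arbitrary, \(\mathcal{H}_{\mathrm{RVP}}\subseteq\mathcal{H}_{\mathrm{dense}}\). This holds whether \(W_1\) is taken as a general block-diagonal matrix or restricted to softmax-normalized blocks \(\tilde{P}_c\), because \(\Omega\) ranges over all of \(\mathbb{R}^{CM\times C}\) on the dense side.

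For the parameter count, the free entries of \(W_1\) are exactly the \(M\) entries of each diagonal block \(\tilde{P}_c\), \(c=1,\dots,C\). All off-diagonal blocks are fixed zeros. This gives \(CM\) parameters, which matches \(P\in\mathbb{R}^{C\times M}\) from which the \(\tilde{P}_c\) are computed. The matrix \(E\) contributes \(C^2\) entries, so the structured output mapping has \(CM+C^2\) trainable parameters. The dense map has all \(CM\cdot C=C^2M\) entries of \(\Omega\) free.

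The one point needing care is what ``contains \(CM+C^2\) trainable parameters'' means. I would state it as a count of free real parameters in the parameterization, not as the dimension of the image set in \(\mathbb{R}^{CM\times C}\). The map \((W_1,E)\mapsto W_1(I+E)\) need not be injective; for example, a common rescaling of the blocks of \(W_1\) can be absorbed into \(E\). Also, the softmax normalization removes one degree of freedom per row of \(P\). With the statement read as a parameter count, no further argument is needed. I would add a remark that \(CM+C^2\le C^2M\) whenever \(M\ge 2\) and \(C\ge 2\): the inequality is \(CM+C^2\le C^2M \iff M+C\le CM \iff (C-1)(M-1)\ge 1\). This shows the structured class is parametrically lighter, consistent with the overfitting discussion in the ablation.
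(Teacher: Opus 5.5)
Your proposal is correct and matches the paper's proof: you realize each RVP map as a dense one via $\Omega = W_1(I+E)$, and you count $CM$ parameters from $P$ plus $C^2$ from $E$, against $C^2M$ for $\Omega$. Your two additions are accurate refinements the paper leaves implicit: reading the count as a parameterization count rather than the dimension of the image set, and checking that $CM+C^2\le C^2M \iff (C-1)(M-1)\ge 1$.
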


\begin{proof}
For any \(W_1\) and \(E\), define
\[
    \Omega = W_1(I+E).
\]
Since
\[
    \Omega \in \mathbb{R}^{CM\times C},
\]
every function in \(\mathcal{H}_{\mathrm{RVP}}\) is also an element of
\(\mathcal{H}_{\mathrm{dense}}\). Hence,
\[
    \mathcal{H}_{\mathrm{RVP}}
    \subseteq
    \mathcal{H}_{\mathrm{dense}}.
\]

For the parameter count, \(W_1\) is determined by \(C\) groups of \(M\)
intra-class weights, corresponding to \(CM\) trainable parameters, while
\(E\) contributes \(C^2\) trainable parameters. Therefore, the structured
RVP output mapping contains \(CM+C^2\) trainable parameters. In contrast,
the dense matrix
\(\Omega\in\mathbb{R}^{CM\times C}\)
contains
\[
    CM\times C = C^2M
\]
trainable parameters.
\end{proof}

\noindent
\cref{prop:hypothesis_class} formalizes the regularization effect of RVP. It does not enlarge the dense hypothesis class. Instead, it restricts it to a structured subset that first aggregates prompts within each class and then applies a residual inter-class correction. This is particularly desirable in the few-shot regime, where unrestricted dense mappings are more likely to overfit.

\paragraph{Discussion.}
Taken together, the results above explain why RVP is effective. \cref{prop:text_span} shows that RVP preserves the pretrained CLIP semantic subspace. \cref{thm:projection} and \cref{cor:margin_recovery} show that its inter-class residual can explicitly remove low-rank shared semantic components that obscure fine-grained discrimination. \cref{prop:stable_margin} shows that this correction is stable because it acts in residual form. Finally, \cref{prop:hypothesis_class} shows that RVP achieves these benefits while restricting the classifier to a structured low-complexity family. Together, these properties provide a principled explanation for why RVP works well in few-shot fine-grained visual reprogramming.

\subsection{Proof of Removing Visual Embedding Normalization at Inference}
\label{app:remove_norm}

In this section, we show that the \(\ell_2\)-normalization of the visual embedding can be omitted at inference without changing the final predicted class, provided that only classification decisions are of interest.

Recall that the inference-time logits of RVP are given by
\begin{equation}
    \mathbf{z} = \hat{\mathbf{v}}^\top \hat{W},
\end{equation}
where \(\hat{\mathbf{v}} \in \mathbb{R}^D\) is the normalized visual embedding and \(\hat{W} \in \mathbb{R}^{D \times C}\) is the reparameterized classifier matrix. Let \(\mathbf{v} \in \mathbb{R}^D\) denote the corresponding unnormalized visual embedding produced by the frozen image encoder. By definition,
\begin{equation}
    \hat{\mathbf{v}} = \frac{\mathbf{v}}{\|\mathbf{v}\|_2}.
\end{equation}
Substituting this into the inference equation gives
\begin{equation}
    \mathbf{z}
    = \left( \frac{\mathbf{v}}{\|\mathbf{v}\|_2} \right)^\top \hat{W}
    = \frac{1}{\|\mathbf{v}\|_2} \mathbf{v}^\top \hat{W}.
\end{equation}
Now define the logits computed without visual normalization as
\begin{equation}
    \tilde{\mathbf{z}} = \mathbf{v}^\top \hat{W}.
\end{equation}
Then we have
\begin{equation}
    \mathbf{z} = \frac{1}{\|\mathbf{v}\|_2} \tilde{\mathbf{z}}.
\end{equation}
That is, the normalized and unnormalized logits differ only by the multiplicative factor \(1 / \|\mathbf{v}\|_2\), which is a positive scalar shared by all classes for the same sample.

Let \(z_c\) and \(\tilde{z}_c\) denote the \(c\)-th entries of \(\mathbf{z}\) and \(\tilde{\mathbf{z}}\), respectively. Then for any two classes \(i\) and \(j\),
\begin{equation}
    z_i - z_j
    = \frac{1}{\|\mathbf{v}\|_2} \left( \tilde{z}_i - \tilde{z}_j \right).
\end{equation}
Since \(\|\mathbf{v}\|_2 > 0\), multiplication by \(1 / \|\mathbf{v}\|_2\) preserves the sign of every pairwise logit difference. Therefore,
\begin{equation}
    z_i > z_j
    \quad \Longleftrightarrow \quad
    \tilde{z}_i > \tilde{z}_j.
\end{equation}
This implies that the ordering of class logits is unchanged, and hence
\begin{equation}
    \arg\max_{c \in \{1,\dots,C\}} z_c
    =
    \arg\max_{c \in \{1,\dots,C\}} \tilde{z}_c.
\end{equation}

Therefore, omitting the \(\ell_2\)-normalization of the visual embedding does not affect the final predicted class. In other words, if only top-1 classification is required, the inference rule
\begin{equation}
    \mathbf{z} = \hat{\mathbf{v}}^\top \hat{W}
\end{equation}
is equivalent to
\begin{equation}
    \tilde{\mathbf{z}} = \mathbf{v}^\top \hat{W}.
\end{equation}
The latter has exactly the same form as a standard linear classifier applied to backbone features.

We emphasize that this equivalence holds for classification decisions, but not for the absolute scale of the logits or calibrated confidence scores. Indeed, removing the normalization changes the magnitude of the logits by a sample-dependent factor \( \|\mathbf{v}\|_2^{-1} \), which may affect softmax probabilities, confidence calibration, or any downstream procedure that depends on logit scale.

\section{Implementation Details.}
\label{sec:implementation}
Following prior work~\citep{cai2025attribute,cai2025understanding}, we train the visual prompt with a learning rate of 40, momentum 0.9, using stochastic gradient descent (SGD), and a cosine annealing scheduler~\citep{loshchilov2016sgdr} for 200 epochs. For all datasets, we use a batch size of 64. For the intra-class matrix $P$ and inter-class matrix $E$, we use a learning rate of $10^{-3}$. To ensure a fair comparison, we adopt the same text descriptions as in~\citep{cai2025understanding}, using $M=20$ prompts per class. Our method does not introduce additional hyperparameters. The temperature $\tau$ is inherited from the pretrained CLIP model and kept fixed during training. 

All experiments are conducted on a single NVIDIA L40S GPU with 48 GB of memory. The full set of experiments across 11 datasets requires approximately 47.5 hours. As shown in \cref{fig:rvp_memory_fgvc_aircraft}, the training process uses about 6.37 GB of GPU memory.

\begin{figure*}[ht]
    \centering
    \begin{subfigure}[t]{\linewidth}
        \centering
        \includegraphics[width=\linewidth]{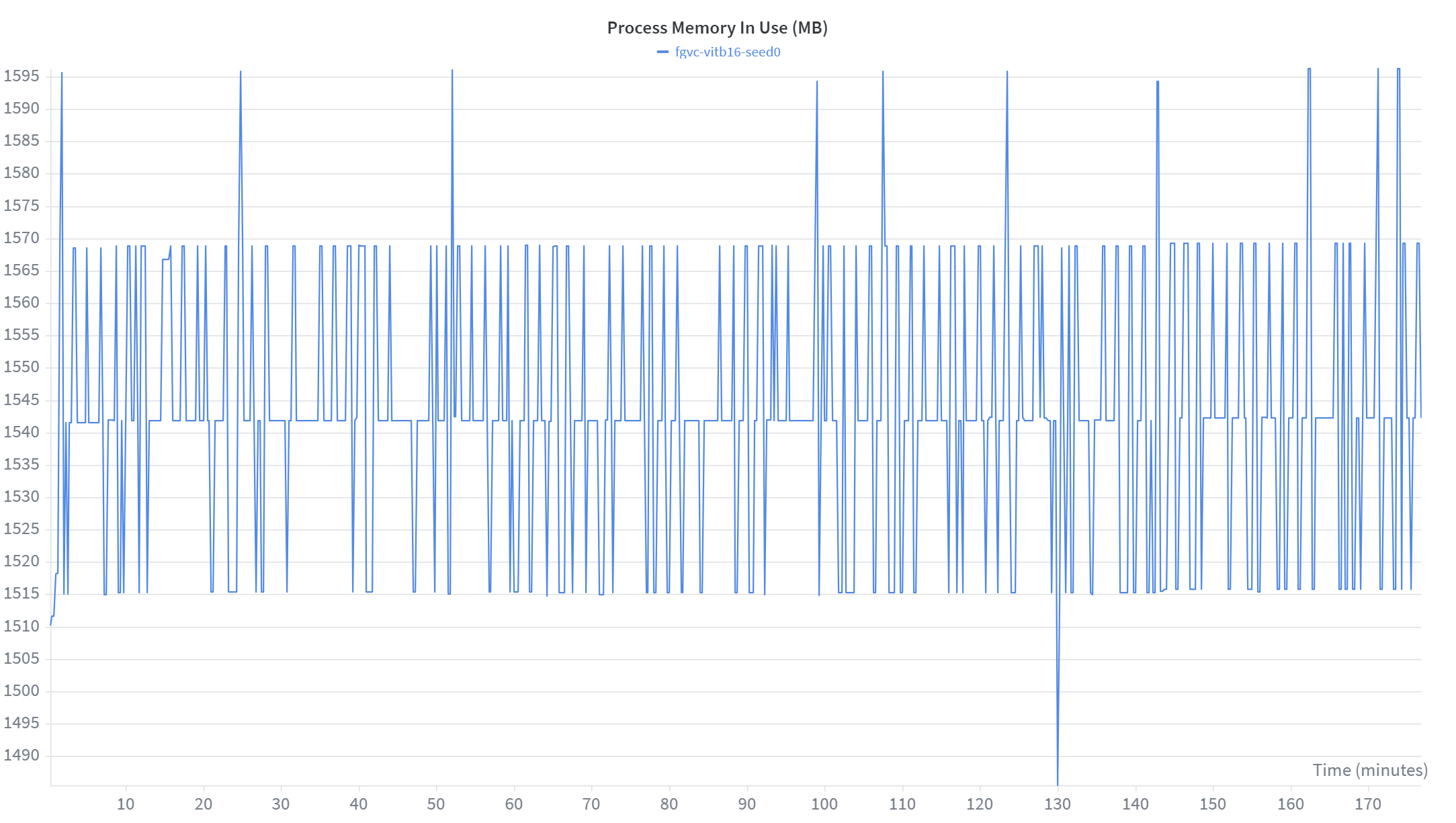}
        \caption{Host (CPU) memory usage during training}
        \label{fig:rvp_cpu_memory}
    \end{subfigure}
    \hfill
    \begin{subfigure}[t]{\linewidth}
        \centering
        \includegraphics[width=\linewidth]{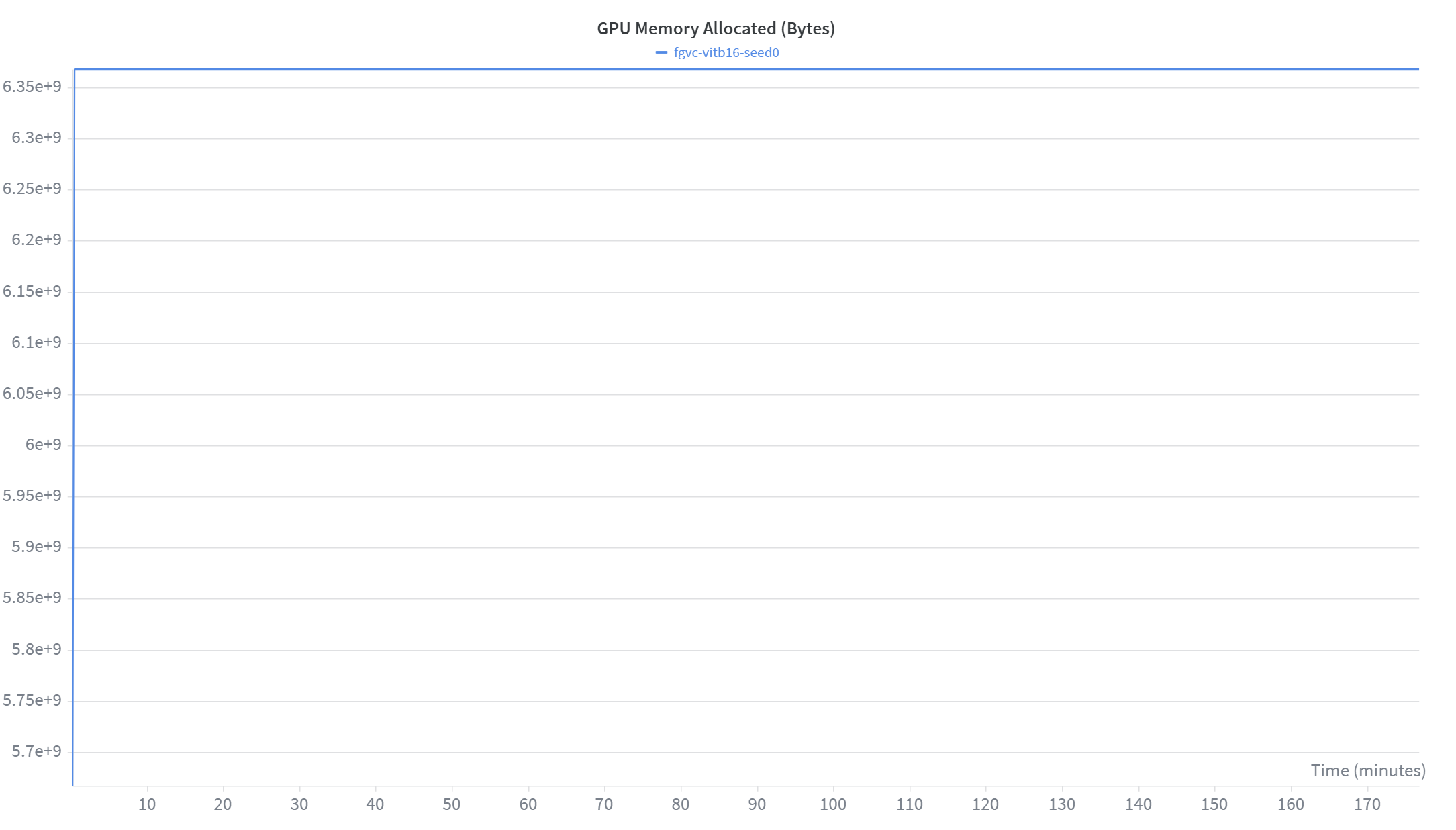}
        \caption{GPU memory usage during training}
        \label{fig:rvp_gpu_memory}
    \end{subfigure}
    \caption{Memory consumption when training RVP with a ViT-B/16-based CLIP backbone on the FGVC Aircraft dataset. Left: host memory. Right: GPU memory.}
    \label{fig:rvp_memory_fgvc_aircraft}
\end{figure*}

\section{Notations}
In this section, we summarize the abbreviations and key mathematical notations used in this paper to improve clarity.
\subsection{Abbreviations}
\begin{table}[H]
\centering
\caption{Abbreviations used in the paper}
\label{tab:rvp_abbrev}
\small
\begin{tabular}{l|l}
\toprule
\textbf{Abbreviation} & \textbf{Description} \\
\midrule
RVP & Reparameterized Inter-Class Visual Reprogramming. \\
VR & Visual Reprogramming. \\
VLM & Vision-Language Model. \\
CLIP & Contrastive Language-Image Pre-training. \\
VP & Visual Prompting / standard visual reprogramming baseline. \\
AR & Adversarial Reprogramming baseline. \\
AttrVR & Attribute-based Visual Reprogramming. \\
DVP & Decoupled Visual Prompting. \\
DVP-cls & DVP with partitions formed by unsupervised clustering of description embeddings. \\
LLM & Large Language Model. \\
PRM & Probability Reweighting Matrix used in DVP. \\
CE Loss & Cross-Entropy Loss. \\
SGD & Stochastic Gradient Descent. \\
\bottomrule
\end{tabular}
\end{table}

\subsection{Notation}
\begin{table}[H]
\centering
\caption{Generic notation in visual reprogramming}
\label{tab:rvp_notation_core}
\small
\begin{tabular}{l|l}
\toprule
\textbf{Symbol} & \textbf{Description} \\
\midrule
$f_{\rm img}$ & CLIP image encoder. \\
$f_{\rm txt}$ & CLIP text encoder. \\
$\mathcal{X}^{\rm S}$ & Source image space of the pretrained CLIP model, with $\mathcal{X}^{\rm S}\subseteq \mathbb{R}^{d_{\rm S}}$. \\
$\mathcal{X}^{\rm T}$ & Target image space for the downstream task, with $\mathcal{X}^{\rm T}\subseteq \mathbb{R}^{d_{\rm T}}$. \\
$\mathcal{Y}^{\rm T}$ & Label space of the downstream task, with $\mathcal{Y}^{\rm T}=\{1,\dots,C\}$. \\
$\mathcal{V}$ & Text space containing textual descriptions. \\
$\mathcal{Z}$ & Shared embedding space for image and text features, with $\mathcal{Z}\subseteq \mathbb{R}^{D}$. \\
$x^{\rm S}$ & Source-domain image. \\
$x^{\rm T}$ & Target-domain image. \\
$V$ & A text description in $\mathcal{V}$. \\
$y^{\rm T}$ & A downstream class label. \\
$C$ & Number of downstream classes. \\
$D$ & Embedding dimension of CLIP. \\
$d_{\rm S}$ & Input dimensionality of source-domain images. \\
$d_{\rm T}$ & Input dimensionality of target-domain images. \\
$\hat{v}$ & $\ell_2$-normalized visual embedding produced by the CLIP image encoder. \\
$\hat{t}$ & $\ell_2$-normalized text embedding produced by the CLIP text encoder. \\
$f_{\rm clip}(x^{\rm S},V)$ & CLIP similarity score between image $x^{\rm S}$ and text $V$. \\
$\tau$ & Temperature parameter in CLIP similarity computation. \\
$\mathcal{A}$ & Full set of textual descriptions used for the downstream task. \\
$\mathcal{A}(y^{\rm T})$ & Set of textual descriptions associated with class $y^{\rm T}$. \\
$M$ & Number of textual descriptions per class. \\
$a$ & A description element from $\mathcal{A}$. \\
$\mathrm{agg}(\cdot)$ & Aggregation operator over description-level similarities. \\
$[f_{\rm logits}(x^{\rm T};\mathcal{A})]_{y^{\rm T}}$ & Logit of class $y^{\rm T}$ computed from aggregated image-text similarities. \\
$|\mathcal{A}|$ & Number of textual descriptions in $\mathcal{A}$. \\
$f_{\rm in}(x^{\rm T}\mid \delta)$ & Input transformation to map a target-domain image into CLIP input space. \\
$\delta$ & Trainable visual prompt. \\
$\mathcal{D}$ & Downstream training set. \\
$\omega$ & Reweighting matrix that maps description-level similarities to class logits. \\
$\phi$ & Overall linear mapping from the normalized image embedding to downstream class logits. \\
$N$ & Number of training samples in $\mathcal{D}$. \\
\bottomrule
\end{tabular}
\end{table}

\begin{table}[H]
\centering
\caption{Notation for visual reprogramming and the linear mapping view}
\label{tab:rvp_notation_vr}
\small
\resizebox{\textwidth}{!}{%
\begin{tabular}{l|l}
\toprule
\textbf{Symbol} & \textbf{Description} \\
\midrule
$T$ & Stacked matrix of normalized text embeddings. In RVP with $C$ classes and $M$ descriptions per class, $T\in \mathbb{R}^{CM\times D}$. \\
$M_a$ & Vector of similarity scores over all textual descriptions. \\
$M_y$ & Vector of downstream class logits. \\
$P \in \mathbb{R}^{C\times M}$ & Learnable intra-class weighting matrix for aggregating attribute descriptions within each class. \\
$\tilde{P}_c$ & Softmax-normalized weight vector for the $c$-th class, obtained from the $c$-th row of $P$. \\
$\tilde{P}_{c,m}$ & Normalized weight assigned to the $m$-th textual description of class $c$. \\
$\hat{t}_{c,m}$ & Normalized embedding of the $m$-th textual description for class $c$. \\
$f_c(x^{\rm T})$ & Aggregated base logit for class $c$ before inter-class refinement. \\
$f(x^{\rm T})$ & Row vector of base logits for all classes, $f(x^{\rm T})=[f_1(x^{\rm T}),\dots,f_C(x^{\rm T})]\in\mathbb{R}^{C}$. \\
$E \in \mathbb{R}^{C\times C}$ & Learnable inter-class adjacency matrix for modeling class relationships. \\
$I$ & Identity matrix used in residual message passing. \\
$z \in \mathbb{R}^{C}$ & Final class logit vector after inter-class refinement. \\
$W_1 \in \mathbb{R}^{CM\times C}$ & Sparse routing matrix constructed from the normalized intra-class weights. \\
$\hat{W} \in \mathbb{R}^{D\times C}$ & Reparameterized linear classifier that absorbs text embeddings, intra-class aggregation, and inter-class refinement. \\
$\hat{v}^{\top}\hat{W}$ & Final inference form of RVP as a single linear projection on the normalized visual embedding. \\
$v^{\top}\hat{W}$ & Equivalent inference form for top-1 prediction when visual feature normalization is omitted. \\
\bottomrule
\end{tabular}%
}
\end{table}

\end{document}